\documentclass[11pt]{article}
\usepackage[margin=1in]{geometry}
% \renewcommand{\familydefault}{ppl}
% \renewcommand{\baselinestretch}{1.0}

% \linespread{1.1}\vfuzz2pt \hfuzz2pt

%--------- Packages -----------
\usepackage{amssymb}
\usepackage{latexsym}
\usepackage{amsmath}
\usepackage{amsthm}
\usepackage{thmtools}
\usepackage{thm-restate}
\usepackage{dirtytalk}
\usepackage{dsfont}
\usepackage{enumerate}
\usepackage{listings}
\usepackage{mathtools}
\usepackage{xcolor}
\usepackage[round]{natbib}
\usepackage{nameref}
\usepackage[colorlinks, linkcolor=blue, filecolor = blue, citecolor = blue, urlcolor = magenta]{hyperref}
\usepackage[noabbrev,capitalize]{cleveref}
\usepackage{graphicx}
\usepackage{mathtools}
\usepackage{xspace}

\usepackage[extdef=true]{delimset}
\usepackage{enumitem}
\usepackage{algorithm}
\usepackage{algorithmicx}
\usepackage[noend]{algpseudocode}
\usepackage{nicefrac}

\usepackage[textsize=tiny]{todonotes}

\allowdisplaybreaks
\setlength\parindent{0pt}

\newcommand{\papertitle}{{\begin{center}Only Pay for What Is Uncertain: \\Variance-Adaptive Thompson Sampling\end{center}}}

%\newcommand{\spapertitle}{{Dueling Convex Optimization with General Preferences}}

% The \author macro works with any number of authors. There are two commands
% used to separate the names and addresses of multiple authors: \And and \AND.
%
% Using \And between authors leaves it to LaTeX to determine where to break the
% lines. Using \AND forces a line break at that point. So, if LaTeX puts 3 of 4
% authors names on the first line, and the last on the second line, try using
% \AND instead of \And before the third author name.

\usepackage[textsize=tiny]{todonotes}
\newcommand{\todob}[2][]{\todo[color=red!20,size=\tiny,inline,#1]{B: #2}}
 % AS comments

% ======= Begin Preamble ============%--------Environments----------
\newtheorem{thm}{Theorem}%[section]
\newtheorem{lem}[thm]{Lemma}

\newcommand{\R}{{\mathbb R}}

\newcommand{\N}{{\mathbb N}}
\renewcommand{\P}{{\mathbb P}}

\newcommand{\cN}{{\mathcal N}}

\newcommand{\hmu}{\hat \mu}

\renewcommand{\v}{{\mathbf v}}

\renewcommand{\ref}{\cref}

\newcommand{\bkappa}{\boldsymbol \kappa}

\newcommand{\bbeta}{\boldsymbol \beta}
\newcommand{\balpha}{\boldsymbol \alpha}

\newcommand{\bnu}{{\boldsymbol \nu}}
\newcommand{\bmu}{{\boldsymbol \mu}}

\newcommand{\bsigma}{\boldsymbol \sigma}

\DeclareMathOperator*{\argmax}{argmax}

\mathchardef\mhyphen="2D

%{\norm{\w_1-\w^*}}

% debug
%\newcommand{\todo}[1]{\begin{color}{blue}{{\bf~[to do:~#1]}}\end{color}}
\newcommand{\red}[1]{\textcolor{red}{{#1}}}

\newcommand{\klucb}{\ensuremath{\tt KL\mhyphen UCB}\xspace}

\newcommand{\ucb}{\ensuremath{\tt UCB1}\xspace}
\newcommand{\ucbnormal}{\ensuremath{\tt UCB1\mhyphen Normal}\xspace}
\newcommand{\ucbtuned}{\ensuremath{\tt UCB1\mhyphen Tuned}\xspace}
\newcommand{\ucbv}{\ensuremath{\tt UCB\mhyphen V}\xspace}

\newcommand{\varts}{\ensuremath{\tt VarTS}\xspace}

\newcommand{\htts}{\ensuremath{\tt TS14}\xspace}
\newcommand{\ztts}{\ensuremath{\tt TS20}\xspace}

\newcommand{\realset}{\mathbb{R}}

\newcommand{\diag}[1]{\mathrm{diag}\left(#1\right)}

\newcommand{\E}[1]{{\mathbb{E} \left[#1\right]}}
\newcommand{\condE}[2]{\mathbb{E} \left[#1 \,\middle|\, #2\right]}

\newcommand{\condprob}[2]{\mathbb{P} \left(#1 \,\middle|\, #2\right)}

\newcommand{\std}[1]{\mathrm{std} \left[#1\right]}

\newcommand{\bign}[1]{\big(#1\big)}
\newcommand{\biggn}[1]{\bigg(#1\bigg)}
\newcommand{\Bign}[1]{\Big(#1\Big)}

\newcommand{\bigsn}[1]{\big[#1\big]}
\newcommand{\biggsn}[1]{\bigg[#1\bigg]}
\newcommand{\Bigsn}[1]{\Big[#1\Big]}

\newcommand*\dif{\mathop{}\!\mathrm{d}}

\newcommand{\I}[1]{\mathds{1} \! \left\{#1\right\}}

\renewcommand{\set}[1]{\left\{#1\right\}}

%\newcommand{\T}{^\top}

%=============================== END PREAMBLE ===============================

\title{\papertitle}

\author{}
%\iffalse%%%%%%%%%%%%%%%%%%%%%%%%%%%%%
\author{
Aadirupa Saha%
\thanks{Apple ML Research. This work started when the author was at TTI, Chicago; {\tt aadirupa@iisc.ac.in}.}
\and 
Branislav Kveton%
\thanks{AWS AI Labs. This work started prior to joining AWS AI Labs; {\tt bkveton@amazon.com}.} 
}
%\fi%%%%%%%%%%%%%%%%%%%%%%%%%%%%%%%%%%%
\date{}

\begin{document}

\maketitle

\begin{abstract}
Most bandit algorithms assume that the reward variances or their upper bounds are known, and that they are the same for all arms. This naturally leads to suboptimal performance and higher regret due to variance overestimation. On the other hand, underestimated reward variances may lead to linear regret due to committing early to a suboptimal arm. This motivated prior works on variance-adaptive frequentist algorithms, which have strong instance-dependent regret bounds but cannot incorporate prior knowledge on reward variances. We lay foundations for the Bayesian setting, which incorporates prior knowledge. This results in lower regret in practice, due to using the prior in the algorithm design, and also improved regret guarantees. Specifically, we study Gaussian bandits with \emph{unknown heterogeneous reward variances}, and develop a Thompson sampling algorithm with prior-dependent Bayes regret bounds. We achieve lower regret with lower reward variances and more informative priors on them, which is precisely why we pay only for what is uncertain. This is the first result of its kind. Finally, we corroborate our theory with extensive experiments, which show the superiority of our variance-adaptive Bayesian algorithm over prior frequentist approaches. We also show that our approach is robust to model misspecification and can be applied with estimated priors.
\end{abstract}

%\vspace{-3pt}
\section{Introduction}
\label{sec:introduction}

A \emph{stochastic bandit} \citep{lai85asymptotically,auer02finitetime,lattimore19bandit} is an online learning problem where a \emph{learning agent} sequentially interacts with an environment over $n$ rounds. In each round, the agent pulls an \emph{arm} and receives a \emph{stochastic reward}. The mean rewards of the arms are initially unknown and the agent learns them by pulling the arms. Therefore, the agent faces an \emph{exploration-exploitation dilemma} when pulling the arms: \emph{explore}, and learns more about the arms; or \emph{exploit}, and pull the arm with the highest estimated reward. An example of this setting is a recommender system, where the arm is a recommendation and the reward is a click.

Most bandit algorithms assume that the reward variance or it is upper bound is known. For instance, the confidence intervals in \ucb \citep{auer02finitetime} are derived under the assumption that the rewards are $[0, 1]$, and hence $\sigma^2$-sub-Gaussian for $\sigma = 0.5$. In Bernoulli \klucb \citep{garivier11klucb} and Thompson sampling (TS) \citep{agrawal12analysis}, tighter confidence intervals are derived for Bernoulli rewards. Specifically, a Bernoulli random variable wither either a low or high mean also has a low variance. In general, the reward variance is hard to specify \citep{audibert09}. \emph{While overestimating it is typically safe, this decreases the learning rate of the bandit algorithm and increases regret. On the other hand, when the variance is underestimated, this may lead to linear regret because the algorithm commits to an arm without sufficient evidence.}

We motivate learning of reward variances by the following example. Take a movie recommender that learns to recommend highest rated movies in a \say{Trending Now} carousel. The movies are rated on scale $[1, 5]$. Some movies, such as The Godfather, are classics. Therefore, their ratings are high on average and have low variance. On the other hand, ratings of low-budget movies are often low on average and have low variance, due to the quality of the presentation. Finally, most movies are made for a specific audience, such as Star Wars, and thus have a high variance in ratings. Clearly, any sensible learning algorithm would require a lot less queries to estimate the mean ratings of movies with low variances. Since the variance is unknown a priori, adaptation is necessary. This would reduce the overall query complexity and improve statistical efficiency--as one should--because we only pay for what is uncertain. This example is not limited to movies and applies to other domains, such as online shopping. Our work answers the following questions in affirmative:

\begin{center}
\emph{
Can we quickly learn the right representation of the reward distribution for efficient learning? What is the right trade-off of the learner's performance (regret) versus the prior parameters and reward variances? Can we design an algorithm to achieve that rate? Does the regret decrease with lower reward variances and more informative priors on them?}
\end{center}

Unknown reward variances are a major concern and thus have been studied extensively. In the cumulative regret setting, \citet{audibert09} proposed an algorithm based on upper confidence bounds (UCBs) and \citet{mukherjee18} proposed an elimination algorithm. In best-arm identification (BAI) \citep{audibert09exploration,bubeck09pure}, several papers studied the fixed-budget \citep{gabillon11multibandit,faella20kernel,SGV20,lalitha2023fixed} and fixed-confidence \citep{lu21variancedependent,zhou22approximate,jourdan22dealing} settings with unknown reward variances. All above works studied frequentist algorithms. On the other hand, Bayesian algorithms based on posterior sampling \citep{thompson33likelihood,chapelle11empirical,agrawal12analysis,russo14learning,russo18tutorial,kveton21metathompson,hong22hierarchical} perform well in practice, but learning of reward variances in these algorithms is understudied. Our problem setup and Bayesian objective are presented in \cref{sec:prob}. Our contributions are summarized below.

\textbf{Contributions:} \textbf{(1)} To warm up, we study Thompson sampling in a $K$-armed Gaussian bandit with \emph{known heterogeneous reward variances} and bound its Bayes regret (\cref{sec:bayes_known}). Our regret bound (\cref{thm:bayes_known}) decreases as reward variances decrease. It also approaches zero as the prior variances of mean arm rewards go to zero. In this case, a Bayesian learning agent knows the bandit instance with certainty. \textbf{(2)} We propose a Thompson sampling algorithm \varts for a $K$-armed Gaussian bandit with \emph{unknown heterogeneous reward variances} (\cref{sec:bayes_unknown2}). \varts maintains a joint Gaussian-Gamma posterior for the mean and precision of the rewards of all arms and samples from them in each round. 
\textbf{(3)} We prove a Bayes regret bound for \varts (\cref{thm:bayes_unknown2}), which decreases with lower reward variances and more informative priors on them. This is the first regret bound of its kind. The novelty in our analysis is in handling random confidence interval widths due to random reward variances. The resulting regret bound captures the same trade-offs as if the variance was known, replaced by the corresponding prior-dependent quantities. \textbf{(4)} We comprehensively evaluate \varts on various types of reward distributions, from Bernoulli to beta to Gaussian. Our evaluation shows that \varts outperforms all existing baselines, even with an estimated prior (\cref{sec:experiments}). This showcases the generality and robustness of our method.

%\vspace{-4pt}
\section{Related Work}
\label{sec:rel}
%\vspace{-4pt}

The beginnings of variance-adaptive algorithms can be traced to \citet{auer02finitetime}. \citet{auer02finitetime} proposed a variance-adaptive \ucb, called \ucbnormal, for Gaussian bandits where the reward distribution of arm $i$ is $\cN(\mu_i, \sigma_i^2)$ and $\sigma_i > 0$ is assumed to be known. The $n$-round regret of this algorithm is $O\left(\sum_{i: \mu_i < \mu_{a^*}} \frac{\sigma_i^2}{\Delta_i} \log n\right)$, where $a^*$ is the arm with the highest mean reward $\mu_i$. The first UCB algorithm for unknown reward variances with an analysis was \ucbv by \citet{audibert09}. The key idea in the algorithm is to design high-probability confidence intervals based on empirical Bernstein bounds. The $n$-round regret of \ucbv is $O\left(\sum_{i: \mu_i < \mu_{a^*}} \left(\frac{\sigma_i^2}{\Delta_i} + b\right) \log n\right)$, where $b$ is an upper bound on the absolute value of the rewards. In summary, variance adaptation in \ucbv incurs only a small penalty of $O(b K \log n)$. \citet{mukherjee18} proposed an elimination-based variant of $\ucbv$ that attains the optimal gap-free regret of $O(\sqrt{K n})$, as opposing to the original $O(\sqrt{K n \log n})$. While empirical Bernstein bounds are general, they tend to be conservative in practice. This was observed before by \citet{garivier11klucb} and we observe the same trend in our experiments (\cref{sec:experiments}). Our work can be viewed as a similar development with Thompson sampling. We show that Thompson sampling with unknown reward variances (\cref{sec:bayes_unknown2}) incurs only a slightly higher regret than the one with known variances (\cref{sec:bayes_known}), by a multiplicative factor. Compared to \ucbv, the algorithm is highly practical.

Two closest papers to our work are \citet{honda14optimality,MV20}. Both papers propose variance-adaptive Thompson sampling and bound its regret. There are three key differences from our work. First, the algorithms of \citet{honda14optimality,MV20} are designed for the frequentist setting. Specifically, they have a fixed sufficiently-wide prior, and enjoy a per-instance regret bound under this prior. While this is a strong guarantee, the algorithms can perform poorly when priors are narrower and thus more informative. Truly Bayesian algorithm designs, as proposed in our work, can be analyzed for any informative prior. Second, the analyses of \citet{honda14optimality,MV20} are frequentist. Therefore, they cannot justify the use of more informative priors. In contrast, we prove regret bounds that decrease with lower reward variances and more informative priors on them. Finally, the regret bounds of \citet{honda14optimality,MV20} are asymptotic. We provide strong finite-time guarantees. We discuss these difference in more detail after \cref{thm:bayes_unknown2} and demonstrate them empirically in \cref{sec:experiments}.

Another related line of works are variance-dependent regret bounds for $d$-dimensional linear contextual bandits \citep{kim22,quan22,quan23,zhang21}. These works address the problem of time-dependent variance adaptivity. They derive frequentist regret bounds that scale as $\tilde O(\text{poly}(d)\sqrt{1 + \sum_{s = 1}^n \sigma_s^2})$, where $\sigma_s^2$ is an unknown reward variance in round $s$. This setting is different from ours in two aspects. First, they study changing reward variances over time but keep them fixed across the arms. We do the opposite in our work. Second, their algorithm designs and analyses are frequentist, and thus cannot exploit prior knowledge. On the other hand, we focus only on $K$-armed bandits, which is a special case of linear bandits.

%\input{relatedworks.tex}

%\vspace{-4pt}
\section{Problem Setup}
\label{sec:prob}
%\vspace{-4pt}

\textbf{Notation.} The set $\set{1, \dots, n}$ is denoted by $[n]$. The indicator $\I{E}$ denotes that event $E$ occurs. We use boldface letters to denote vectors. For any vector $\v \in \R^d$, we denote by $v_i$ its $i$-th entry, or sometimes simply $v(i)$. We denote the entry-wise square of $\v$ by $\v^2$. A diagonal matrix with entries $\v$ is denoted by $\diag{\v}$. $\tilde{O}$ denotes big O notation up to polylogarithmic factors. Gaussian, Gamma, and Gaussian-Gamma distributions are denoted by $\cN$, $\text{Gam}$, and $NG$, respectively.

\textbf{Setting.} A bandit \emph{instance} is a pair of mean arm rewards and reward variances, $(\bmu, \bsigma^2)$, where $\bmu \in \R^K$ is a vector of mean arm rewards, $\bsigma^2 \in \R_{\geq 0}^K$ are the reward variances, and $K$ is the number of arms. We propose algorithms and analyze them for both when the reward variances $\bsigma^2$ are known (\cref{sec:bayes_known}) and unknown (\cref{sec:bayes_unknown2}).

\textbf{Feedback model.} The agent interacts with the bandit instance $(\bmu, \bsigma^2)$ for $n$ rounds. In round $t \in [n]$, it pulls one arm and observes a stochastic realization of its reward. We denote the pulled arm in round $t$ by $A_t \in [K]$, a stochastic reward vector for all arms in round $t$ by $\boldsymbol{x}_t \in \realset^K$, and the reward of arm $i \in [K]$ by $x_{t, i} \in \realset$. The rewards are sampled from a Gaussian distribution, $x_{t, i} \sim \cN(\mu_i, \sigma_i^2)$. The interactions of the agent up to round $t$ are summarized by a \emph{history} $H_s = \big( A_{1}, x_{1,A_1}, \dots, A_{t}, x_{t, A_t}\big)$.

\textbf{Bayesian bandits setting.} We consider a \emph{Bayesian multi-armed bandit} \citep{russo14learning,russo18tutorial,kveton21metathompson,hong22hierarchical} where the bandit instance is either fully or partially random:
\textbf{(i).} When \emph{reward variances are known} (\cref{sec:bayes_known}): The bandit instance $(\bmu,\bsigma)$ is generated as follows. The mean arm rewards are sampled from a Gaussian distribution, $\bmu \sim P_0 = \cN(\bmu_0, \diag{\bsigma_0^2})$, where $\bmu_0 \in \realset^K$ and $\bsigma_0^2 \in \R_{\geq 0}^K$ are the prior means and variances of $\bmu$, respectively. Both $\bmu_0$ and $\bsigma_0^2$ are assumed to be known by the agent. The reward variances $\bsigma^2$ are also known.
\textbf{(ii).} When \emph{reward variances are unknown} (\cref{sec:bayes_unknown2}): We assume that the bandit instance $(\bmu, \bsigma)$ is sampled from a Gaussian-Gamma prior distribution. More specifically, for any arm $i$, the mean and variance of its rewards are sampled as $(\mu_i, \sigma_i^{-2}) \sim NG(\mu_{0, i}, \kappa_{0, i}, \alpha_{0, i}, \beta_{0, i})$, where $(\bmu_0, \bkappa_0, \balpha_0, \bbeta_0)$ are known prior parameters. This can also be seen as first sampling $\sigma_i^{-2} \sim \text{Gam}(\alpha_{0,i},\beta_{0,i})$ followed by $\mu_i \sim \cN(\mu_{0,i}, \frac{\sigma_i^2}{\kappa_{0,i}})$. This equivalence follows from the basic properties of the Gaussian-Gamma distribution, as shown in \cref{lem:gauss_gam} in Appendix.

\textbf{Regret.} We measure the $n$-round \emph{Bayes regret} of a learning agent with instance prior $P_0$ as:
\begin{align}
  \textstyle
  R_n
  =  \E{{\sum_{t = 1}^n
  \mu_{A^*} - n\mu_{A_t}}}\,,
  \label{eq:reg_bayes}
\end{align} 
where $A^* = \argmax_{i \in [K]} \mu_{i}$ denotes the \emph{optimal arm}. The above expectation is over the mean arm rewards $\bmu \sim P_0$, unlike in the frequentist setting where $\bmu$ would be unknown but fixed \citep{lattimore19bandit}. The randomness in the above expectation also includes how the algorithm chooses $A_t$ and the randomness in the observed bandit feedback $x_{t,A_t} \sim \cN(\mu_{A_t},\sigma_{A_t}^2)$.

We depart from the classic bandit setting \citep{auer02finitetime,abbasi-yadkori11improved,lattimore2020bandit} in two major ways. First, we consider Gaussian reward noise, as opposing to more general sub-Gaussian noise. The Gaussian noise and corresponding conjugate priors lead to closed-form posteriors in our algorithms and analyses, which simplifies them. This is why this choice has been popular in recent Bayesian analyses \citep{lu19informationtheoretic,kveton21metathompson,wan21metadatabased,hong22hierarchical,hong22deep}. Second, our regret is Bayesian, on average over bandit instances. An alternative would be the frequentist regret, which holds for any bounded bandit instance. We choose the Bayes regret because it can be used to capture the relation between the bandit instance and its prior, and thus show benefits of informative priors. We discuss this in depth throughout the paper, and especially after \cref{thm:bayes_known} and \cref{thm:bayes_unknown2}. To alleviate concerns about Gaussian posteriors in the algorithm design, we experiment with a variety of other bandit problems in \cref{sec:experiments}.

%\input{MetaTS.tex}

%\vspace{-4pt}
\section{Gaussian Bandit with Known Variances}
\label{sec:bayes_known}
%\vspace{-4pt}

We start with the Bayesian setting with Gaussian rewards and known heterogeneous reward variances. In \cref{sec:bayes_known_algo}, we introduce a Thompson sampling algorithm \citep{thompson33likelihood,chapelle11empirical,agrawal12analysis,russo14learning,gopalan14thompson} for this setting. Gaussian TS is straightforward and appeared in many prior works, starting with \citet{agrawal13further}. We state the regret bound and discuss it in \cref{sec:known_reg}. The regret bound scales roughly as: 
%\todob{Make sure that the parentheses are tall enough for the math inside them. See below. This is at multiple places.} 
%\todob{Do we do colons before display math? Choose one style and stick to it. This is at multiple places.}
$%\begin{align}
  \textstyle
  \sqrt{n\log n} \sqrt{\sum_{i =1}^K\sigma_i^2
  \log\Big(1 + n\frac{\sigma_{0,i}^{2}}{\sigma_i^{2}}\Big)}\,.
  \label{eq:rough bound known}
$%\end{align}
One notable property of the bound is that it goes to zero when the reward variances $\sigma_i^2$ or the prior variances of the mean arm rewards $\sigma_{0, i}^2$ do. Although the bound is novel, its proof mostly follows \citet{kveton21metathompson}. The main reason for stating the bound is to contrast it with the main result in \cref{sec:bayes_unknown2}.

%\vspace{-10pt}
\begin{algorithm}[h]
  \caption{Gaussian TS for known reward variances.}
  \label{alg:bayes_known}
  \begin{algorithmic}[1]
    \State \textbf{Inputs:} Prior means $\bmu_0$, prior variances $\bsigma_0^2$, and reward variances $\bsigma^2$
    \State \textbf{Init:} $\forall i \in [K]: N_1(i):=0, \ \hmu_{1,i}:= \mu_{0,i}, \ \sigma_{1,i}:= \sigma_{0,i}, \bar{x}_{1,i}:= 0$ 
    \Statex \vspace{-0.05in}
    \For{$t = 1, \dots, n$} %\todob{Up to $n$. Not up to $T$.}
    		\State Posterior sampling: $\forall i \in [K]: \tilde \mu_{t,i} \sim \cN(\hmu_{t,i},\sigma_{t,i}^2)$
        \State Pull: $A_t:= \arg\max_{i \in [K]}\tilde \mu_{t,i}$
    		\State Reward feedback: $x_{t,A_t} \sim \cN(\mu_{A_t},\sigma_{A_t}^2)$
    		\State Posterior update: 
			%%%%%%    
    		\For{$i = 1, \dots, K$}
			\State $N_{t+1}(i):= N_{t}(i) + \I{A_t = i}$, \ $\sigma_{t+1, i}^2:= \frac{1}{\sigma_0^{-2} + N_{t+1}(i) \sigma_i^{-2}}$
    		%	\State Update posterior variance: 
  		%	\State Update posterior mean: 
     \State $\hmu_{t+1, i}
  := \sigma_{t+1,i}^2\Big( \frac{\mu_{0,i}}{\sigma_{0,i}^2} + \frac{N_{t+1}(i)\bar x_{t+1,i}}{\sigma_{i}^2} \Big)$, s.t. $\bar x_{t+1,i}:= \frac{1}{N_{t+1}(i)}\sum_{s=1}^{t} \I{A_{s}=i} x_{s,i}$
    		\EndFor
	\EndFor
   % \vspace{-5pt}
  \end{algorithmic}
\end{algorithm}

\vspace{-5pt}
\subsection{Gaussian Thompson Sampling}
\label{sec:bayes_known_algo}
\vspace{-2pt}
The key idea in our algorithm is to maintain a posterior distribution over the unknown mean arm rewards $\bmu$ and act optimistically with respect to samples from it. Since $\bmu$ and its rewards are sampled from Gaussian distributions, the posterior is also Gaussian. Specifically, the posterior distribution of arm $i$ in round $t$ is $\cN(\hmu_{t,i},\sigma_{t,i}^2)$, where $\hat \mu_{t, i}$ and $\sigma_{t, i}^2$ are the posterior mean and variance, respectively, of arm $i$ in round $t$. These quantities are initialized as $\hat \mu_{1,i}:= \mu_{0,i}$ and $\sigma_{1,i}:= \sigma_{0,i}$.

Our algorithm is presented in \cref{alg:bayes_known} and we call it \emph{Gaussian TS} due to Gaussian rewards. The algorithm works as follows. In round $t$, it samples the mean reward of each arm $i$ from its posterior, $\tilde \mu_{t, i} \sim \cN(\hmu_{t, i}, \sigma_{t, i}^2)$. After that, the arm with the highest posterior-sampled mean reward is pulled, $A_t:= \arg\max_{i \in [K]}\tilde \mu_{t,i}$. Finally, the algorithm observes a stochastic reward of arm $A_t$, $x_{t,A_t} \sim \cN(\mu_{A_t},\sigma_{A_t}^2)$, and updates its posteriors (\cref{lem:gaussian posterior update} in Appendix) as:
\begin{align*}
  \sigma_{t+1, i}^2
  := \frac{1}{\sigma_0^{-2} + N_{t+1}(i) \sigma_i^{-2}}\,, \quad
  \hmu_{t+1, i}
  := \sigma_{t,i}^2\bigg( \frac{\mu_{0,i}}{\sigma_{0,i}^2} +
  \frac{N_{t+1}(i)\bar x_{t+1,i}}{\sigma_{i}^2} \bigg)\,,
\end{align*}
where $\bar x_{t+1,i}:= \frac{1}{N_{t+1}(i)}\sum_{s=1}^{t} \I{A_{s}=i} x_{s,i}$ %\todob{$x_{s,i}$} 
is the empirical mean reward of arm $i$ at the beginning of round $t+1$ and $N_{t+1}(i)$ is the number of its pulls.

\subsection{Regret Analysis}
\label{sec:known_reg}

Before analyzing \cref{alg:bayes_known}, let us recall the setting again: The mean arm rewards are sampled from a Gaussian prior, $\bmu \sim P_0 = \cN(\bmu_0, \diag{\bsigma_0^2})$, where $\bmu_0 \in \realset^K$ and $\bsigma_0^2 \in \R_{\geq 0}^K$ are the prior means and variances of $\bmu$, respectively. The reward of arm $i$ in round $t$ is sampled as $x_{t,i} \sim \cN(\mu_i,\sigma_i^2)$. Both $\bmu_0$ and $\bsigma_0^2$ and reward variances $\bsigma^2$ are fixed and known. Our regret bound is presented below.

\begin{restatable}[Variance-dependent regret bound for known variances]{thm}{bayesknown}
\label{thm:bayes_known} Consider the above setting. Then for any $\delta > 0$, the Bayes regret of Gaussian TS is bounded as:
\begin{align*}
  & R_n
  \leq \sum_{i = 1}^K\sqrt{\frac{2 \sigma_{0,i}^2}{\pi}} n \delta + 
  \sqrt{2n} \sqrt{\sum_{i =1}^K\sigma_i^2\Big(\log(1 + n\sigma_{0,i}^{2}\sigma_i^{-2}) + \sigma_{0,i}^{2}\sigma_i^{-2}\Big)\log(1/\delta)}\,.
\end{align*}
\end{restatable}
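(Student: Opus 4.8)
The plan is to follow the standard Bayes-regret analysis of Thompson sampling via confidence intervals and probability matching, specialized to heterogeneous known variances. First I would introduce an $H_t$-measurable upper confidence function $U_t(i) := \hmu_{t,i} + c\,\sigma_{t,i}$ (with the symmetric lower bound), where the single width parameter is chosen as $c := \sqrt{2\log(1/\delta)}$ so that, conditioned on the history $H_t$, the Gaussian posterior $\cN(\hmu_{t,i},\sigma_{t,i}^2)$ places at most $\delta$ mass above $U_t(i)$. The structural fact I would invoke is probability matching: under the product Gaussian prior the posterior factorizes over arms, the algorithm samples $\tmu_{t,i}\sim\cN(\hmu_{t,i},\sigma_{t,i}^2)$, and the conditional law of $\bmu$ given $H_t$ is exactly this same product, so $\tmu_t \stackrel{d}{=} \bmu \mid H_t$. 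Consequently the pulled arm $A_t = \argmax_i \tmu_{t,i}$ and the optimal arm $A^* = \argmax_i \mu_i$ are identically distributed conditioned on $H_t$, and since $U_t$ is deterministic given $H_t$ this yields $\condE{U_t(A^*)}{H_t} = \condE{U_t(A_t)}{H_t}$.

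With these two ingredients the per-round Bayes regret decomposes as
\begin{align*}
\condE{\mu_{A^*} - \mu_{A_t}}{H_t}
= \condE{\mu_{A^*} - U_t(A^*)}{H_t}
+ \condE{U_t(A_t) - \mu_{A_t}}{H_t}\,,
\end{align*}
where the telescoping middle term $\condE{U_t(A^*) - U_t(A_t)}{H_t}$ vanishes by probability matching. The second term is the confidence width: since $A_t$ is independent of $\bmu$ given $H_t$, the posterior mean of $\mu_{A_t}$ is $\hmu_{t,A_t}$, so the term equals $c\,\condE{\sigma_{t,A_t}}{H_t}$. The first term is an overflow, positive only when $\mu_{A^*}$ exceeds its own upper bound; I would bound it arm-by-arm via the truncated-Gaussian mean $\E{(Z-c)^+} = \phi(c) - c\,\prob{Z > c} \le \phi(c) = \tfrac{1}{\sqrt{2\pi}}e^{-c^2/2}$, which is $O(\delta)$ for the chosen $c$, and then relax the residual scale using $\sigma_{t,i} \le \sigma_{0,i}$ together with $\E|\mu_i - \mu_{0,i}| = \sqrt{2\sigma_{0,i}^2/\pi}$ under the prior, so that the total overflow across all $n$ rounds is $\sum_i \sqrt{2\sigma_{0,i}^2/\pi}\, n\delta$.

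The analytic heart is summing the confidence widths. Summing $c\,\sigma_{t,A_t}$ over rounds and applying Cauchy--Schwarz gives $c\sqrt{n}\sqrt{\sum_t \sigma_{t,A_t}^2}$. I would then regroup the inner sum by arms and use the posterior-variance recursion $\sigma_{t,i}^{-2} = \sigma_{0,i}^{-2} + N_t(i)\,\sigma_i^{-2}$ to write, for each arm $i$,
\begin{align*}
\sum_{t \,:\, A_t = i} \sigma_{t,i}^2
= \sum_{k=0}^{N_n(i)-1} \frac{1}{\sigma_{0,i}^{-2} + k \sigma_i^{-2}}
\leq \sigma_i^2\Big(\sigma_{0,i}^2\sigma_i^{-2} + \log\bign{1 + n \sigma_{0,i}^2 \sigma_i^{-2}}\Big)\,,
\end{align*}
where the bound splits off the $k=0$ term (contributing $\sigma_{0,i}^2\sigma_i^{-2}$ after factoring out $\sigma_i^2$) and controls the remaining decreasing summand by the integral $\int_0^{N_n(i)}(\sigma_{0,i}^{-2}+x\sigma_i^{-2})^{-1}\dif x$, using $N_n(i)\le n$. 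Substituting $c=\sqrt{2\log(1/\delta)}$ turns the width contribution into exactly $\sqrt{2n}\sqrt{\sum_i \sigma_i^2(\log(1+n\sigma_{0,i}^2\sigma_i^{-2})+\sigma_{0,i}^2\sigma_i^{-2})\log(1/\delta)}$, and adding the overflow term gives the claimed bound; the same choice of $c$ simultaneously produces the $n\delta$ term, so all constants close cleanly.

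I expect the main obstacle to be the bookkeeping in the variance-dependent potential sum rather than the probability-matching step, which is routine for Bayes regret. Relative to the homogeneous-variance analysis, the heterogeneity forces the per-arm scale $\sigma_i^2$ to be carried through Cauchy--Schwarz and the elliptical-potential-style sum rather than pulled out globally, and obtaining the exact additive correction $\sigma_{0,i}^2\sigma_i^{-2}$ requires treating the first pull ($k=0$) separately from the integral comparison. A secondary point of care is verifying that the single width $c=\sqrt{2\log(1/\delta)}$ makes both the Gaussian-tail overflow and the confidence-width sum match the stated constants, which is why the half-normal factor $\sqrt{2\sigma_{0,i}^2/\pi}$ appears in the first term.
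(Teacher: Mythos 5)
Your proposal is correct and follows essentially the same route as the paper's proof: the same probability-matching step equating the conditional laws of $A_t$ and $A^*$ given $H_t$, the same Gaussian-tail overflow bound yielding the $\sum_i \sqrt{2\sigma_{0,i}^2/\pi}\,n\delta$ term, and the same Cauchy--Schwarz plus reciprocal-sum (elliptical-potential) argument with the $k=0$ term split off to produce $\sigma_i^2\bign{\log(1+n\sigma_{0,i}^2\sigma_i^{-2})+\sigma_{0,i}^2\sigma_i^{-2}}$. The only cosmetic difference is that you phrase the decomposition through an explicit UCB function $U_t$ while the paper works directly with $\hmu_{t,i}$ and the good event $E_t$; these are equivalent.
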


The complete proof of \cref{thm:bayes_known} is in \cref{app:known}. We discuss the bound below.

\textbf{Dependence on all parameters of interest and prior.} For $\delta = 1 / n$, the bound in \cref{thm:bayes_known} scales roughly as
$
  \tilde{O}\left(\sqrt{n \sum_{i = 1}^K\sigma_i^2 \log\Bign{1 + n\sigma_{0,i}^{2}\sigma_i^{-2} }} + \sqrt{n \sum_{i = 1}^K \sigma_{0,i}^2}\right)
$.
Note that we ignore the first term in \cref{thm:bayes_known}, which is order-wise dominated by the second term when $\delta = 1 / n$. Our bound has several properties that we discuss next. First, it matches the usual $\sqrt{n}$ dependence of all classic Bayes regret bounds \citep{russo14learning,russo16information,lu19informationtheoretic}. Second, it increases with variances $\sigma_i^2$ of individual arm rewards, which is expected because higher reward variances make learning harder. Third, the bound can be viewed as a generalization of existing bounds that assume homogeneous reward variance. Specifically, \citet{kveton21metathompson} derived a $\tilde{O}(\sqrt{\sigma^2 K n})$ Bayes regret bound in Lemma 4 under the assumption that the reward distribution of arm $i$ is $\cN(\mu_i, \sigma^2)$. We match it when $\sigma_i = \sigma$ for all arms $i$. Fourth, the bound approaches zero as $\sigma_{0, i} \to 0$. In this setting, Gaussian TS knows the mean reward $\mu_i$ almost with certainty because its prior variance $\sigma_{0, i}^2$ is low, and no exploration is necessary. This is a unique property of Bayes regret bounds that is not captured by any frequentist analysis, such as that of \ucbnormal \citep{auer02finitetime}.

\textbf{Regret optimality.} Starting with the seminal works of \citet{russo14learning,russo16information}, all Bayes regret bounds are $\tilde{O}(\sqrt{n})$ and do not have finite-time instance-dependent lower bounds. \citet{lattimore2020bandit} derived a $\Omega(\sqrt{K n})$ asymptotic lower bound for a $K$-armed bandit as $n \to \infty$ (Theorem 35.1). Our regret bound (\cref{thm:bayes_known}) matches this rate when the prior and reward variances are the same for all arms, such as $\sigma_{0, i} = \sigma_i = 1$ for any $i \in [K]$. In addition, it gives an improved dependence on lower reward variances and more informative priors, which implies faster learning rates in these regimes. In fact, when the prior variances of all mean arm rewards go to zero, $\sigma_{0, i} \to 0$ for any $i \in [K]$, our bound goes to zero; as expected. Thus we conjecture that our regret bound is worst-case optimal. The only other lower bound that we are aware of is $\Omega(\log^2 n)$ for a $K$-armed bandit (Theorem 3 in \citet{lai87adaptive}). This lower bound is asymptotic and applies only to exponential-family reward distributions with a single parameter, which excludes Gaussian distributions because they have two parameters. To conclude, we believe that deriving a tight finite-time $\Omega({\sqrt{K n}})$ lower bound for our setting is an important problem, but this may require new techniques and should be of independent interest to the Bayesian community itself.

\iffalse%%%%%%%%%%%% 
\todob{
The above paragraph is a potential minefield. I would rephrase it as:

* All recent Bayes regret analyses are $\tilde{O}(\sqrt{n})$.

* There is a $\Omega(\sqrt{K n})$ lower bound that we match for some setting.

* There is a recent Bayes regret analysis of BayesUCB (Logarithmic Bayes Regret Bounds) with a $O(\log^2 n)$ regret bound that matches the $\Omega(\log^2 n)$ lower bound of \citet{lai87adaptive} in Theorem 3. Both bounds trade off $\sqrt{n}$ for $\log n$ and an additional prior-dependent constant. We do not have a dependence on the additional prior-dependent constant. It is also unclear if a $O(\log^2 n)$ regret bound can be proved for TS.
}
\fi%%%%%%%%%%%% 

%%%%%%%%%%%%%%%%%%%%%%

\section{Gaussian Bandit with Unknown Variances}
\label{sec:bayes_unknown2}

Our main contribution is that we consider the Bayesian setting with Gaussian rewards and unknown heterogeneous reward variances. Similarly to \cref{sec:bayes_known}, we propose a Thompson sampling algorithm for this setting in \cref{sec:bayes_unknown_algo}. 
%The algorithm is novel and generalizes even that of \citet{honda14optimality}, which is the closest related work in the frequentist setting. \red{Add \cite{MV20,quan23}...}. 
We discuss the regret bound in \cref{sec:unknown_reg}. 
%, and sketch its proof in \cref{sec:unknown_reg_prf}
The regret bound scales roughly as:
$%\begin{align*}
  \textstyle
  \sqrt{n \log n}\sqrt{\sum_{i = 1}^K \frac{\beta_{0,i}}{\alpha_{0,i}-1}
  \log\Big( 1+\frac{n}{\kappa_{0,i}}\Big)}\,,
$%\end{align*}
where $\frac{\beta_{0, i}}{\alpha_{0, i} - 1}$ represents a proxy for the reward variance $\sigma_i^2$ in \eqref{eq:rough bound known} and $\kappa_{0, i}^{-1}$ plays the role $\sigma_{0, i}^2/\sigma_i^2$. Since the dependencies are analogous, the bound captures the structure of the problem similarly to \eqref{eq:rough bound known}. \emph{Our main novelty lies in handling the uncertainty of reward variances $\bsigma^2$, which is unique among all existing TS proofs.}

\vspace{-3pt}
\subsection{Algorithm \varts}
\label{sec:bayes_unknown_algo}

Similarly to \cref{alg:bayes_known}, the key idea is to maintain a posterior distribution over the unknown mean arm rewards $\bmu$ and act optimistically with respect to samples from it. The challenge is that the reward variances $\bsigma^2$ are also unknown. To overcome this, we rely on the observation that the posterior distribution of $(\mu_i, \sigma_i^{-2})$ is Gaussian-Gamma when the prior is and the rewards are Gaussian. We represent the posterior hierarchically, in an equivalent form (\cref{lem:gauss_gam} in Appendix), as follows. The posterior distribution of the mean reward of arm $i$ in round $t$ is $\cN(\hmu_{t,i},\sigma_{t,i}^2)$, where $\hat \mu_{t, i}$ and $\sigma_{t, i}^2$ are the posterior mean and sampled variance, respectively. The variance is defined as $\sigma_{t,i}^2 = \frac{1}{\kappa_{t,i}\lambda_{t,i}}$, where $\kappa_{t,i} = O(N_t(i))$ and $\lambda_{t,i}$ is a posterior-sampled reward precision of arm $i$ in round $t$. The posterior distribution of $\lambda_{t,i}$ is $\text{Gam}(\alpha_{t,i},\beta_{t,i})$, where $\alpha_{t,i}$ and $\beta_{t,i}$ are its shape and rate parameters, respectively. All posterior parameters are initialized by their prior values $(\mu_{0,i}, \kappa_{0,i}, \alpha_{0,i}, \beta_{0,i})$.

Our algorithm is presented in \cref{alg:bayes_unknown} and we call it \varts, because it adapts to the unknown reward variances of arms. The algorithm works as follows. In round $t$, it first samples the precision of each arm from its posterior, $\lambda_{t,i} \sim \text{Gam}(\alpha_{t,i},\beta_{t,i})$, and then it samples the mean arm reward from its posterior, $\tilde \mu_{t,i} \sim \cN(\hmu_{t,i},\frac{1}{\kappa_{t,i}\lambda_{t,i}})$. After that, the arm with the highest posterior-sampled mean reward is pulled, $A_t:= \arg\max_{i \in [K]}\tilde \mu_{t,i}$. Finally, the algorithm observes a stochastic reward of arm $A_t$, $x_{t,A_t} \sim \cN(\mu_{A_t},\sigma_{A_t}^2)$, and updates its posteriors (lines $9$--$18$ in \cref{alg:bayes_unknown}).

\begin{algorithm}[h]
  \caption{\varts: Gaussian-Gamma TS for unknown reward variances.}
  \label{alg:bayes_unknown}
  \begin{algorithmic}[1]
    \State \textbf{Inputs:} Prior means $\bmu_0$, prior precision $\bkappa_0$, prior shape $\balpha_0$, and prior rate $\bbeta_0$
    \State \textbf{Init:} $\forall i \in [K]: N_1(i):=0, \ \hmu_{1,i}:= \mu_{0,i}, \ \kappa_{1,i}:= \kappa_{0,i}, \ \alpha_{1,i}:= \alpha_{0,i}, \ \beta_{1,i}:= \beta_{0,i}, \ \bar{x}_{1,i}:= 0$. %\todob{$\bar{x}_{1,i} := 0$}
    \Statex \vspace{-0.05in}
    \For{$t = 1, \dots, n$} %\todob{Up to $n$. Not up to $T$.}
    		\State Posterior sampling: $\forall i \in [K]:
      \lambda_{t,i}\sim\text{Gam}(\alpha_{t,i},\beta_{t,i}), \ \tilde \mu_{t,i} \sim \cN(\hmu_{t,i},\frac{1}{\kappa_{t,i}\lambda_{t,i}})$
        \State Pull: $A_t:= \arg\max_{i \in [K]}\tilde \mu_{t,i}$
    		\State Reward feedback: $x_{t,A_t} \sim \cN(\mu_{A_t},\sigma_{A_t}^2)$
    		\State Posterior updates: 
		%%%    
    		\For{$i = 1, \dots, K$}
			\State $N_{t+1}(i):= N_{t}(i) + \I{A_t = i}$    		
    			\State $\beta_{t+1, i}:= \beta_{0,i} + \frac{1}{2}\sum_{s =1}^{t}\I{A_s = i}(x_{s,i} - \bar{x}_{t+1,i})^2 + 
    \frac{\kappa_{0,i} N_{t+1}(i) (\bar{x}_{t+1,i} - \mu_{0,i})^2 }{2(\kappa_{0,i} + N_{t+1}(i))}$ 
  			\State $ \alpha_{t+1, i}
  := \alpha_{0,i} + N_{t+1}(i)/2$
        \State $\kappa_{t+1, i}:= \kappa_{0,i} + N_{t+1}(i)$
  	\State Posterior mean:		
        \State ~~$\hmu_{t+1,i}:= \frac{\kappa_{0,i}\mu_{0,i} + N_{t+1}(i)\bar{x}_{t+1,i}}{\kappa_{0,i}+N_{t+1}(i)}$, where ~$\bar x_{t+1,i}:= \frac{1}{N_{t+1}(i)}\sum_{s=1}^{t} \I{A_{s}=i} x_{s,i}$
    		\EndFor
	\EndFor %\todob{It is strange that $\hat{\mu}_{t, i}$ is the only posterior quantity with the hat. We should explain this convention starting from \cref{sec:bayes_known}.}
  \end{algorithmic}
\end{algorithm}

\vspace{-5pt}
\subsection{Regret Analysis}
\label{sec:unknown_reg}

Recall from \cref{sec:prob}, here the bandit instance $(\bmu, \bsigma)$ is sampled from a Gaussian-Gamma distribution: For any arm $i$, the mean and variance of its rewards are sampled as $(\mu_i, \sigma_i^{-2}) \sim NG(\mu_{0, i}, \kappa_{0, i}, \alpha_{0, i}, \beta_{0, i})$, where $(\bmu_0, \bkappa_0, \balpha_0, \bbeta_0)$ are known prior parameters. This can also be seen as first sampling $\sigma_i^{-2} \sim \text{Gam}(\alpha_{0,i},\beta_{0,i})$ and then $\mu_i \sim \cN(\mu_{0,i}, \frac{\sigma_i^2}{\kappa_{0,i}})$. Our regret bound shows:% presented below.

\begin{restatable}[Variance-dependent regret bound for unknown variances]{thm}{bayesunknown}
\label{thm:bayes_unknown2} Consider the above setting and let $\alpha_{0,i }\geq 1$ for all arms $i \in [K]$. Then for any $\delta > 0$, the Bayes regret of \varts is bounded as:
$%\begin{align*}
  R_n
  \leq C\sqrt{n \log(1 / \delta)} + \delta C \sqrt{\frac{nK}{2\pi}}\,,
$%\end{align*}
where $C^2 = \sum_{i = 1}^K \frac{\beta_{0,i}}{\alpha_{0,i} - 1} \biggn{ \frac{2}{\kappa_{0,i}} + \frac{0.5}{\kappa_{0,i}(\alpha_{0,i}-1)} + 5\log\Big(1 + \frac{n}{\kappa_{0,i}}\Big) }$ is a constant dependent on prior parameters.
\end{restatable}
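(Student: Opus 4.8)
The plan is to follow the same three-step Thompson sampling template as in the proof of \cref{thm:bayes_known}, and to localize all the new difficulty in the treatment of the \emph{random} confidence widths. First I would condition on the history $H_t$, use that $\condE{\mu_i}{H_t} = \hmu_{t,i}$, and invoke the posterior-sampling identity for \varts: given $H_t$, the sampled configuration $(\tilde\mu_{t,i})_i, (\lambda_{t,i})_i$ and the posterior draw $(\mu_i)_i, (\sigma_i^{-2})_i$ are identically distributed, so $A_t$ and $A^*$ are identically distributed given $H_t$. This collapses the instantaneous regret to $\condE{\mu_{A^*} - \hmu_{t,A^*}}{H_t}$, which I split over a confidence event $E_t$ and its complement, leaving two quantities to control: a bad-event tail term and the expected sum of confidence widths.

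For the event I would \emph{not} use the marginal posterior of $\mu_i$ given $H_t$, which is a heavy-tailed Student-$t$ and would destroy the $\log(1/\delta)$ scaling. Instead I condition additionally on the true precisions: by \cref{lem:gauss_gam}, given $(H_t,\bsigma)$ each $\mu_i$ is Gaussian, $\mu_i \mid H_t,\bsigma \sim \cN(\hmu_{t,i}, \sigma_i^2/\kappa_{t,i})$. I then set $C_t(i) = \sqrt{2(\sigma_i^2/\kappa_{t,i})\log(1/\delta)}$ and $E_t = \set{\forall i:\ |\mu_i - \hmu_{t,i}| \le C_t(i)}$. On $\bar E_t$ the contribution is a one-sided Gaussian tail integral identical in form to the known-variance case, equal to $\sum_i \sqrt{\sigma_i^2/(2\pi\kappa_{t,i})}\,\delta$; bounding $\kappa_{t,i}\ge\kappa_{0,i}$ and taking the prior expectation, where $\E{\sigma_i^2}=\beta_{0,i}/(\alpha_{0,i}-1)$ and Jensen absorbs the square root, yields the stated $\delta$-dependent lower-order term.

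The good-event term is where the argument departs from \cref{thm:bayes_known}. On $E_t$ we have $(\mu_{A^*}-\hmu_{t,A^*})\I{E_t}\le C_t(A^*)$, but $C_t(A^*)$ depends on the true variance $\sigma_{A^*}^2$ and is not $H_t$-measurable, so the deterministic transfer used for known variances is unavailable. I would instead apply the posterior-sampling identity to the \emph{width} itself: the map that reads off $\sqrt{2\log(1/\delta)/(\kappa_{t,j}\cdot(\text{precision at }j))}$ at the mean-maximizing coordinate $j$ gives, on the two identically distributed configurations, $\condE{C_t(A^*)}{H_t} = \condE{\widetilde C_t(A_t)}{H_t}$, where $\widetilde C_t(i) = \sqrt{2\log(1/\delta)/(\kappa_{t,i}\lambda_{t,i})}$ uses the \emph{sampled} precision. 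Summing over rounds and applying Cauchy--Schwarz first over time within each arm and then across arms (using $\sum_i N_n(i)=n$), followed by Jensen, reduces the main term to $\sqrt{2n\log(1/\delta)}\,\sqrt{\E{\sum_{i}\sum_{t:A_t=i}\tfrac{1}{\kappa_{t,i}\lambda_{t,i}}}}$. Here $\tfrac{1}{\kappa_{t,i}}=\tfrac{1}{\kappa_{0,i}+N_t(i)}$ telescopes over the pulls of arm $i$ and is controlled by \cref{lem:reciprocal sum}, producing $\log(1+n/\kappa_{0,i})$, while the sampled inverse-precision $\tfrac{1}{\lambda_{t,i}}$ has conditional mean $\beta_{t,i}/(\alpha_{t,i}-1)$, whose total expectation relates by the tower property to the prior variance $\beta_{0,i}/(\alpha_{0,i}-1)$; together these give the leading $\sum_i \tfrac{\beta_{0,i}}{\alpha_{0,i}-1}\log(1+n/\kappa_{0,i})$ inside $C^2$.

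The main obstacle is that $\I{A_t=i}$ and $\lambda_{t,i}$ are \emph{not} independent given $H_t$: a smaller sampled precision inflates the sampling variance $1/(\kappa_{t,i}\lambda_{t,i})$, which makes $\tilde\mu_{t,i}$ more likely to be the maximum, so one cannot simply replace $1/\lambda_{t,i}$ by its conditional mean inside $\E{\sum_{t:A_t=i}\tfrac{1}{\kappa_{t,i}\lambda_{t,i}}}$. I would resolve this by conditioning within each round on $\lambda_{t,i}$ and exploiting that $\max_{j\ne i}\tilde\mu_{t,j}$ is independent of $(\lambda_{t,i},\tilde\mu_{t,i})$, so that the selection probability is an explicit increasing function of $1/\lambda_{t,i}$; bounding the resulting inflation of $\condE{\lambda_{t,i}^{-1}}{A_t=i,H_t}$ and the fluctuation of the random rate $\beta_{t,i}$ via a conditional mean/variance estimate is precisely the novel ``random confidence width'' step. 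This is what accounts for the extra prior-dependent constants $2/\kappa_{0,i}$ and $0.5/(\kappa_{0,i}(\alpha_{0,i}-1))$ and the multiplicative factor $5$ in $C^2$, beyond the clean leading term. Chaining the bad-event term, the transferred good-event term, and this reciprocal-precision bound yields the claimed $R_n \le C\sqrt{n\log(1/\delta)} + \delta C\sqrt{nK/(2\pi)}$.
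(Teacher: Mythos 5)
Your proposal reproduces the skeleton of the paper's proof: the reduction of the instantaneous regret to $\condE{\mu_{A^*}-\hmu_{t,A^*}}{H_t}$, the split over a confidence event, the use of the hierarchical Gaussian--Gamma representation so that $\mu_i\mid H_t,\sigma_i^{-2}$ is Gaussian with variance $\sigma_i^2/\kappa_{t,i}$ (thereby avoiding the Student-$t$ marginal), the transfer of the \emph{random} width from $A^*$ to $A_t$ via the fact that $(\mu_i,\sigma_i^{-2})_{i}\mid H_t$ and $(\tilde\mu_{t,i},\lambda_{t,i})_{i}\mid H_t$ are identically distributed NG draws, and the Cauchy--Schwarz/Jensen/\cref{lem:reciprocal sum} machinery for $\E{\sum_{t}C_t(A_t)}$. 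Up to that point you have the paper's argument.

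The gap is in the final step, which is where every constant in $C^2$ is actually decided. You dispose of the sampled inverse-precision with ``$1/\lambda_{t,i}$ has conditional mean $\beta_{t,i}/(\alpha_{t,i}-1)$, whose total expectation relates by the tower property to the prior variance $\beta_{0,i}/(\alpha_{0,i}-1)$.'' That relation is not a tower-property identity; it is the technical core of the paper (\cref{lem:var_bdd}). One must expand the posterior rate $\beta_{t,i}=\beta_{0,i}+\tfrac12\sum_{s}\I{A_s=i}(x_{s,i}-\bar x_{t,i})^2+\tfrac{\kappa_{0,i}N_t(i)(\bar x_{t,i}-\mu_{0,i})^2}{2(\kappa_{0,i}+N_t(i))}$, condition on $(\mu_i,\sigma_i)$, use that the centered sum of squares is $\sigma_i^2\chi^2_{N_t(i)-1}$ in distribution, that $\E{(\bar x_{t,i}-\mu_i)^2}=\sigma_i^2/N_t(i)$ and $\condE{(\mu_i-\mu_{0,i})^2}{\sigma_i}=\sigma_i^2/\kappa_{0,i}$, and only then integrate out $\E{\sigma_i^2}=\beta_{0,i}/(\alpha_{0,i}-1)$ via the Inverse-Gamma mean. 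This computation --- not a selection-bias correction --- is what produces the terms $2/\kappa_{0,i}$ and $0.5/(\kappa_{0,i}(\alpha_{0,i}-1))$ and the factor $5$ (which absorbs a $\pi^2/6$ from $\sum_s s^{-2}$ and two additional $\log(1+n/\kappa_{0,i})$ contributions). Your proposal instead attributes these constants to bounding the inflation of $\condE{\lambda_{t,i}^{-1}}{A_t=i,H_t}$ over $\condE{\lambda_{t,i}^{-1}}{H_t}$. You are right that this dependence exists and that the paper does not treat it explicitly --- it reindexes the sum over pull counts and substitutes the unconditional posterior mean of $\lambda_{t,i}^{-1}$ --- so flagging it is a fair observation; but your sketch neither shows the inflation is bounded (it need not be, uniformly over histories, when $\P(A_t=i\mid H_t)$ is small) nor connects it to the stated constants. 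As written, the proposal has a hole exactly where the theorem's quantitative content lives, and the sub-argument offered in its place is not carried out.
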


\textbf{Proof of \cref{thm:bayes_unknown2}.} The difficulty lies in tightly bounding confidence intervals of random reward means with unknown reward variances, to get the right dependence on reward variances $\frac{\beta_{0, i}}{\alpha_{0, i} - 1}$ and mean reward variances $\frac{1}{\kappa_{0, i}}$. This is algebraically challenging due to the complicated nature of the posterior updates of Gaussian-Gamma distributions, as presented in \cref{alg:bayes_unknown}. To overcome these difficulties, we carefully condition random variables on each other together with appropriate histories, and combine these using Jensen's and Cauchy-Schwarz inequalities. The key lemmas along with the complete proof of \cref{thm:bayes_unknown2} are in \cref{app:unknown}.

\textbf{Dependence on all parameters of interest and prior.} For $\delta = 1 / n$, the bound in \cref{thm:bayes_unknown2} is $\tilde{O}(\sqrt{C n})$. The dependence on $\sqrt{n}$ is the same as in \cref{thm:bayes_known}. Further, a closer examination of $C$ reveals many similarities: 

First, $\beta_{0, i} / (\alpha_{0, i} - 1)$ is the mean of an Inverse-Gamma distribution with parameters $(\alpha_{0, i}, \beta_{0, i})$. Since the precision of the reward distribution of arm $i$, $\lambda_i$, is sampled from $\text{Gam}(\alpha_{0, i}, \beta_{0, i})$, we have that $\beta_{0, i} / (\alpha_{0, i} - 1)$ is the mean of the reward variance distribution of arm $i$. Thus $\beta_{0, i} / (\alpha_{0, i} - 1)$ in \cref{thm:bayes_unknown2} plays the role of $\sigma_i^2$ in \cref{thm:bayes_known}, which represents the \emph{effective reward variance}.

Second, $\kappa_{0, i}$ in the Gaussian-Gamma prior plays the role of $\sigma_i^2 / \sigma_{0, i}^2$ in the known variance setting \citep{murphy2007conjugate}. Therefore, as $\kappa_{0, i} \to \infty$, the bound in \cref{thm:bayes_unknown2} should go to zero, similarly to \cref{thm:bayes_known}. This is indeed the case and a very unique property of Bayes regret bounds, which is not captured by \citet{honda14optimality,zhou22approximate}.

Finally, we take $\alpha_{0, i}, \beta_{0, i} \to \infty$ while keeping $\beta_{0, i} / (\alpha_{0, i} - 1)$ constant. As the mean and variance of the Inverse-Gamma distribution are $\beta_{0, i} / (\alpha_{0, i} - 1)$ and $\beta_{0, i}^2 / ((\alpha_{0, i} - 1)^2 (\alpha_{0, i} - 2))$, respectively, this correspond to holding the mean of the variance prior fixed while narrowing its width. In this case, we expect the bound in \cref{thm:bayes_unknown2} to approach that in \cref{thm:bayes_known}, which happens because the term $0.5 / (\kappa_{0, i} (\alpha_{0, i} - 1))$ vanishes. After that, the bounds are similar up a multiplicative factor of $5$.

\textbf{Existing frequentist regret bounds for variance-adaptive Thomson sampling.} \citet{honda14optimality,MV20} proposed variance-adaptive Thompson sampling and bounded its regret. These works differ from us in three aspects. First, the algorithms of \citet{honda14optimality,MV20} are designed for the frequentist setting. Specifically, they have a fixed sufficiently-wide prior, and enjoy a per-instance regret bound under this prior. As an example, the algorithm of \citet{MV20} for $\rho \to \infty$ (Remark 4) is essentially \varts with $\bmu_{0, i} = 0$, $\bkappa_{0, i} = 0$, $\balpha_{0, i} = 0.5$, and $\bbeta_{0, i} = 0.5$. While per-instance regret bounds are strong, the algorithms of \citet{honda14optimality,MV20} can perform poorly when priors are narrower and thus more informative. Truly Bayesian algorithm designs, as proposed in our work, can be analyzed for any informative prior. Second, the analyses of \citet{honda14optimality,MV20} are frequentist. This means that they cannot justify the use of more informative priors and are essentially similar to those of frequentist upper confidence bound algorithms. As an example, in Remark 4 of \citet{MV20}, the authors derive a $O\left(\sum_{i: \mu_i < \mu_{a^*}} \frac{1}{\Delta_i} \log n\right)$ regret bound, where $\Delta_i = \mu_{a^*} - \mu_i$ and $a^*$ is the arm with the highest mean reward $\mu_i$. This bound clearly does not depend on prior parameters, which we incorporate in our bounds. Specifically, our bound in \cref{thm:bayes_unknown2} decreases with lower reward variances and more informative priors on them. Finally, the regret bounds of \citet{honda14optimality,MV20} are asymptotic. We provide strong finite-time guarantees.

%%%%%%%%%%%%%%%%%%%%

\section{Experiments}
\label{sec:experiments}
%\vspace{-4pt}

We also study the empirical performance of our proposed algorithms. Since \varts does not assume that the reward variances are known, and thus is more realistic than \cref{alg:bayes_known}, we focus on \varts. We conduct four experiments. First, we evaluate \varts in a Bernoulli bandit, which is a standard bandit benchmark. Second, we experiment with beta reward distributions. Their support is $[0, 1]$, similarly to Bernoulli distributions, but their variances are not fully determined by their means. Since \varts is designed for Gaussian bandits, the first two experiments also evaluate the robustness of \varts to model misspecification. Third, we experiment with a Gaussian bandit. Finally, we vary the number of arms and observe how the performance of \varts scales with problem size.

%\vspace{-5pt}
\subsection{Experimental Setup}
%\vspace{-3pt}

All problems are Bayesian bandits, where the mean arm rewards are sampled from some prior distribution. In a Gaussian bandit, \varts is run with the true $(\bmu_0, \bkappa_0, \balpha_0, \bbeta_0)$. In other problems, the hyper-parameters of \varts are set using the method of moments from samples from the prior. In particular, for a given Bayesian bandit, let $\bar{\mu}_i$ and $v_i$ be the estimated mean and variance of the mean reward of arm $i$ sampled from its prior, respectively. Moreover, let $\bar{\lambda}_i$ and $\nu_i$ be the estimated mean and variance of the precision of reward distribution of arm $i$ sampled from its prior, respectively. Then, based on these statistics, we estimate the unknown hyper-parameters as $\mu_{0, i} = \bar{\mu}_i$, $\beta_{0, i} = \bar{\lambda}_i / \nu_i$, $\alpha_{0, i} = \beta_{0, i} / \bar{\lambda}_i$, and $\kappa_{0, i} = \beta_{0, i} / (\alpha_{0, i} v_i)$ using their empirical estimates \citep{pearson36method}.

We compare \varts to several baselines. \ucb \citep{auer02finitetime} is the most popular algorithm for stochastic $K$-armed bandits with $[0, 1]$ rewards. It does not adapt to the reward variances and is expected to be conservative. We also consider its two variants that adapt to the variances: \ucbtuned \citep{auer02finitetime} and \ucbv \citep{audibert09exploration}. \ucbtuned is a heuristic that performs well in practice. \ucbv uses empirical Bernstein confidence intervals and has theoretical guarantees. We implement both algorithms for $[0, 1]$ rewards. The next two baselines are Thompson sampling algorithms: Bernoulli and Gaussian TS \citep{agrawal13further}. Bernoulli TS has a $\mathrm{Beta}(1, 1)$ prior, as analyzed in \citet{agrawal13further}. When the rewards $Y_{t, i}$ are not binary, we clip them to $[0, 1]$ and then apply Bernoulli rounding: a reward $Y_{t, i} \in [0, 1]$ is replaced with $1$ with probability $Y_{t, i}$ and with $0$ otherwise. Gaussian TS has a $\cN(0, 1)$ prior and unit reward variances, as analyzed in \citet{agrawal13further}. The last two baselines are Thompson sampling with unknown reward variances \citep{honda14optimality,MV20}. We implement Algorithm 1 in \citet{honda14optimality} and call it \htts, and Algorithm 3 in \citet{MV20} for $\rho \to \infty$ and call it \ztts. Note that \ztts is \varts where $\mu_{0, i} = 0$, $\kappa_{0, i} = 0$, $\alpha_{0, i} = 0.5$, and $\beta_{0, i} = 0.5$. The shortcoming of all TS baselines is that they are designed to have frequentist per-instance guarantees. Therefore, their priors are set too conservatively to compete with \varts, which takes the true prior or its estimate as an input. All simulations consider $n = 2\,000$ and are averaged over $1\,000$ randomly initialized runs.

\begin{figure*}[t!]
  \centering
  \includegraphics[width=5.5in]{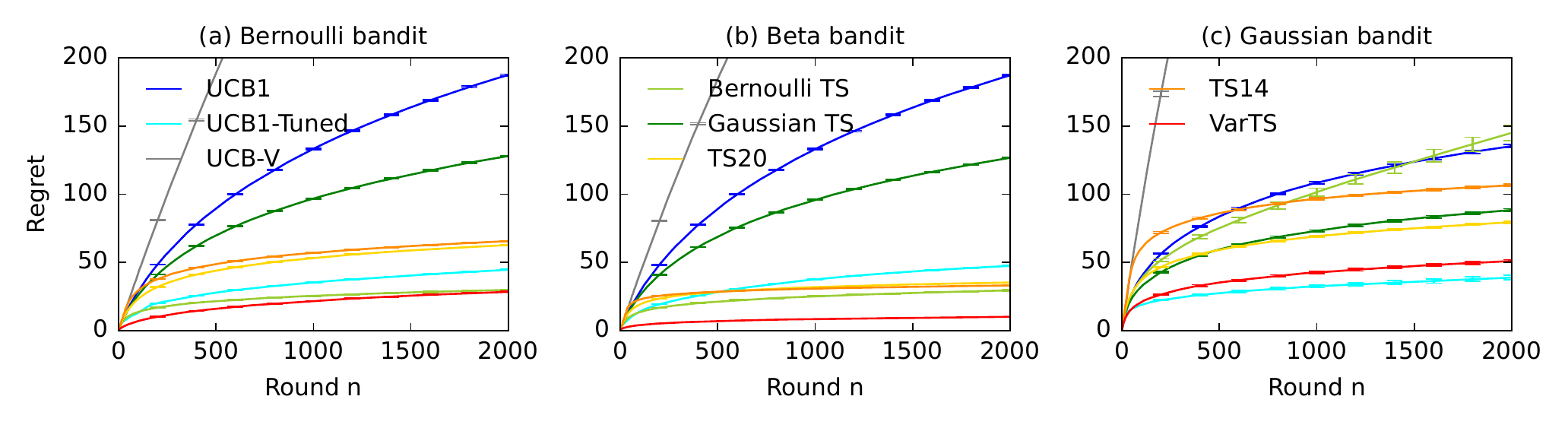}
  \vspace{-0.35in}
  \caption{\varts compared to $7$ baselines. The plots share legends.}
  \label{fig:synthetic}
\end{figure*}

\begin{figure*}[t!]
  \centering
  \includegraphics[width=5.5in]{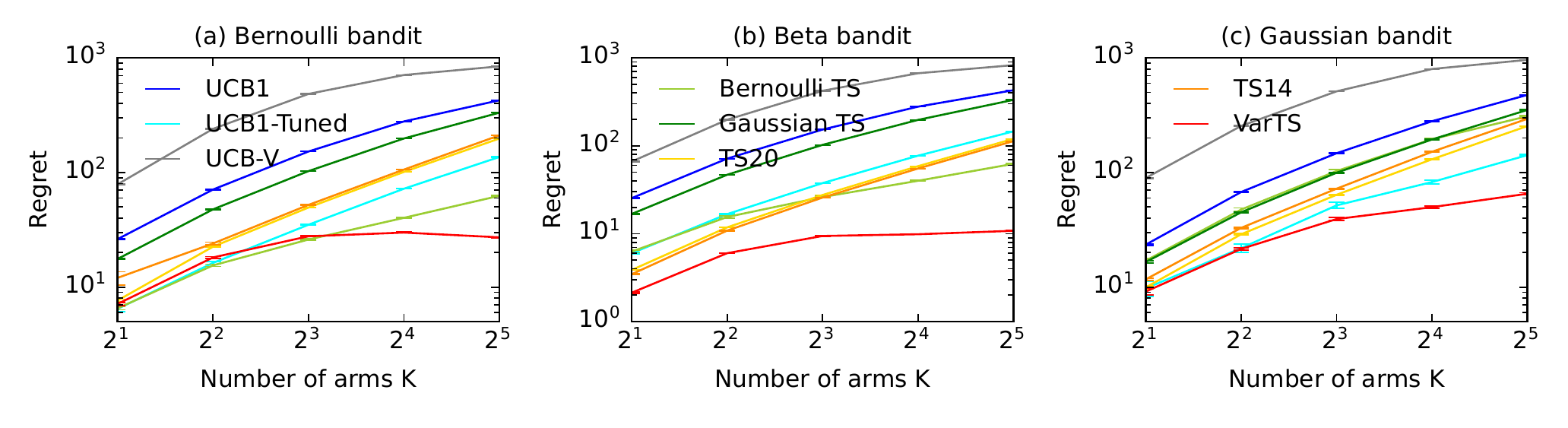}
  \vspace{-0.35in}
  \caption{\varts with $7$ baselines as we vary the number of arms $K$. The plots share legends.}
  \label{fig:scaling}
\end{figure*}

%\vspace{-3pt}
\subsection{Bernoulli Bandit}
\label{sec:bernoulli bandit}

We start with a Bernoulli bandit with $K = 10$ arms. The mean reward of arm $i \in [K]$, $\mu_i$, is sampled i.i.d.\ from prior $\mathrm{Beta}(i, K + 1 - i)$. Since $\E{\mu_i} = i / (K + 1)$ and $\std{\mu_i} \approx 1 / \sqrt{K + 1}$, higher prior means indicate higher $\mu_i$, but it is unlikely that arm $K$ has the highest mean reward.

Our results are reported in \cref{fig:synthetic}a. We observe that \varts and Bernoulli TS have the lowest regret. The latter is not surprising since Bernoulli TS is designed specifically for this problem class. The fact that we match its performance is a testament to adapting to reward variances and using priors. The next three best-performing algorithms (\ucbtuned, \htts, and \ztts) adapt to reward variances but do not use informative priors. The frequentist algorithms with regret bounds (\ucb and \ucbv) have the highest regret because they are too conservative.

%\vspace{-3pt}
\subsection{Beta Bandit}
\label{sec:beta bandit}

The bandit problem in the second experiment is a variant of \cref{sec:bernoulli bandit} where the reward distribution of arm $i$ is $\mathrm{Beta}(s \mu_i, s (1 - \mu_i))$ for $s = 10$. Roughly speaking, this means that the reward variance of arm $i$ is $10$ lower than in \cref{sec:bernoulli bandit}. The rest of the experimental setup is the same as in \cref{sec:bernoulli bandit}.

Our results are reported in \cref{fig:synthetic}b. We observe only two differences from \cref{fig:synthetic}a. First, \varts outperforms Bernoulli TS, because it learns that the arms have $10$ times lower reward variances than in \cref{fig:synthetic}a. Therefore, it can be more aggressive in pulling the optimal arm. Second, both \htts and \ztts outperform \ucbtuned, supposedly due to more principled learning of reward variances.

%\vspace{-3pt}
\subsection{Gaussian Bandit}
\label{sec:gaussian bandit}

The third experiment is with a Gaussian bandit where both the means and variances of rewards are sampled i.i.d.\ from a prior with parameters $\mu_{0, i} = i / (K + 1)$, $\kappa_{0, i} = K$, $\alpha_{0, i} = 4$, and $\beta_{0, i} = 1$. For this setting, $\E{\mu_i} = i / (K + 1)$ and $\std{\mu_i} \approx 1 / \sqrt{K + 1}$. Therefore, higher prior means indicate higher $\mu_i$, but it is unlikely that arm $K$ has the highest mean reward. Moreover, the average reward variance is $0.25$. Therefore, bandit algorithms for $[0, 1]$ rewards are expected to work well.

Our results are reported in \cref{fig:synthetic}c. We observe that \ucbtuned has the lowest regret and \varts performs similarly. This shows the practicality of our design, which is analyzable and comparable to a well-known heuristic without guarantees. All other algorithms have at least $50\%$ higher regret. As before, the frequentist algorithms with regret bounds (\ucb and \ucbv) are overly conservative and among the worst performing baselines.

%\vspace{-4pt}
\subsection{Scalability}
\label{sec:scalability}

We vary the number of arms $K$ and observe how the performance of \varts scales with problem size. This experiment is done in Bernoulli (\cref{sec:bernoulli bandit}), beta (\cref{sec:beta bandit}), and Gaussian (\cref{sec:gaussian bandit}) bandits. Our results in \cref{fig:scaling} shows that the gap between \varts and the baselines increases with $K$. For $K = 32$ and Bernoulli bandit, \varts has at least $3$ times lower regret than any baseline. For $K = 32$ and beta bandit, \varts has at least $5$ times lower regret than any baseline. For $K = 32$ and Gaussian bandit, \varts has at least $2$ times lower regret than any baseline. These gains are driven by adapting to reward variances and using priors, on both the mean reward and its variance.

%\vspace{-4pt}
\section{Conclusions}
\label{sec:concl}
%\vspace{-3pt}

We study the problem of learning to act in a multi-armed Bayesian bandit with Gaussian rewards and heterogeneous reward variances. As a first step, we present a Thompson sampling algorithm for the setting of known reward variances and bound its regret (\cref{thm:bayes_known}). The bound scales as $\sqrt{n\log n} \sqrt{\sum_{i =1}^K\sigma_i^2 \log\left(1 + n\frac{\sigma_{0,i}^{2}}{\sigma_i^{2}}\right)}$. Therefore, it goes to zero as the reward variances $\sigma_i^2$ or the prior variances of the mean arm rewards $\sigma_{0, i}^2$ decrease. Our main contribution is \varts, a variance-adaptive TS algorithm for Gaussian bandits with unknown heterogeneous reward variances. The algorithmic novelty lies in maintaining a joint Gaussian-Gamma posterior for the mean and variance of rewards of each arm. We derive a Bayes regret bound for \varts (\cref{thm:bayes_unknown2}), which scales similarly to the known variance bound. More specifically, it is $\sqrt{n \log n}\sqrt{\sum_{i = 1}^K \frac{\beta_{0,i}}{\alpha_{0,i}-1} \log\left( 1+\frac{n}{\kappa_{0,i}}\right)}$, where $\frac{\beta_{0, i}}{\alpha_{0, i} - 1}$ is a proxy for the reward variance $\sigma_i^2$ and $\kappa_{0, i}^{-1}$ plays the role $\sigma_{0, i}^2/\sigma_i^2$. This bound captures the effect of the prior on learning reward variances and is the first of its kind.

Some interesting directions could be to extend the problem framework to infinite arms, which would require a different set of assumptions on the priors and reward distributions for tractable solutions. Another very practical and general direction could be to incorporate contextualization, where reward variances would vary across items as well as users.

\newpage

\bibliographystyle{plainnat}
\bibliography{References}

%!TEX root = icml-sleeping.tex
\appendix

\onecolumn{

\section*{\centering\Large{Supplementary for \papertitle}}
\vspace*{1cm}

\section{Additional proofs for \cref{sec:bayes_known}}
\label{app:known}

\bayesknown*

\begin{proof}[Proof of \cref{thm:bayes_known}]
Recall we denote by $\hat{\bmu}_t \in \realset^K$ the MAP estimate of $\bmu$ at round $t$, and let $\bmu_t \in \realset^K$ be a random posterior sample in round $t$ such that $\bmu_{t,i} \sim \cN(\hat\mu_{t,i},\sigma_{t,i}^2)$ for all $i \in [K]$.  $H_t$ denote the history in round $t$. Note that in posterior sampling, for any vector $\bnu \in \realset^K$, $\condprob{\bmu_t = \bnu}{H_t} = \condprob{\bmu = \bnu}{H_t}$. Let $A^*$ be the optimal arm under the realized reward vector $\bmu$ and $A_t$ be the optimal arm under $\bmu_t$ (and hence pulled by the TS algorithm at round $t$).

We rely on several properties of Gaussian posterior sampling with a diagonal prior covariance matrix. More specifically, following the results from \cref{lem:gaussian posterior update} or \cite{murphy2007conjugate}, the posterior distribution in round $t$ is $\cN(\hat{\bmu}_t, \Sigma_t)$, where $\Sigma_t = \diag{(\sigma_{t, i}^2)_{i = 1}^K}$ is a diagonal covariance matrix with non-zero entries:
\begin{align*}
  \sigma_{t, i}^2
  = \frac{1}{\sigma_{0,i}^{-2} + N_t(i) \sigma_i^{-2}}
  = \frac{\sigma_i^2}{\sigma_i^2 \sigma_{0,i}^{-2} + N_t(i)}\,,
\end{align*}
and $N_t(i)= \sum_{s = 1}^{t-1} \I{A_t = i}$ denotes the number of pulls of arm $i$ up to round $t$. Accordingly, a high-probability confidence interval of arm $i$ in round $t$ is $C_t(i) = \sqrt{2 \sigma_{t, i}^2 \log(1 / \delta)}$, where $\delta > 0$ is the confidence level. Let us define an event:
\begin{align*}
  E_t
  = \set{\forall i \in [K]: \abs{\mu_i -  \hat{\mu}_{t,i}} \leq C_t(i)},
\end{align*}
that implies all confidence intervals in round $t$ hold.

Fix round $t$. Now we bound the regret in round $t$ as follows.  
We start by noting that the regret can be decomposed as:
\begin{align*}
  \E{\mu_{A^*} - \mu_{A_t}}
  & = \E{\condE{\mu_{A^*} - \mu_{A_t}}{H_t}} \\
  & \leq \E{\condE{\mu_{A^*} - {\hat{\mu}_{t,A^*}}}{H_t}} +
  \E{\condE{{{\hat{\mu}_{t,A_t}}} - \mu_{A_t}}{H_t}}
  \\ & = \E{\condE{\mu_{A^*} - {\hat{\mu}_{t,A^*}}}{H_t}}\,.
\end{align*} 

%The first equality is an application of the tower rule. The second equality holds because $A_t \mid H_t$ and $A^* \mid H_t$ have the same distributions, and $\hat{\mu}_t$ is deterministic given history $H_t$. 
The last equality holds since given $H_t$, clearly $E[\mu_{i}\mid H_t] 
 = \hat{\mu}_{t,i}$ for any $i \in [K]$.

Now let us deal with the remaining term in the decomposition. Fix history $H_t$. Then we introduce event $E_t$ and get
\begin{align*}
  & \condE{\mu_{A^*} - {\hat{\mu}_{t,A^*}} }{H_t}
  \\
  & = \condE{(\mu_{A^*} - {\hat{\mu}_{t,A^*}}) \I{\bar E_t}}{H_t}
  + \condE{(\mu_{A^*} - {\hat{\mu}_{t,A^*}}) \I{E_t}}{H_t}
  \\
  & \leq \condE{(\mu_{A^*} - {\hat{\mu}_{t,A^*}}) \I{\bar{E}_t}}{H_t} + \condE{C_t(A_t)}{H_t}\,,
\end{align*}
where the inequality follows from the observation that

\begin{align*}
  \condE{(\mu_{A^*} - {\hat{\mu}_{t,A^*}}) \I{E_t}}{H_t}
  \leq \condE{C_t(A^*)}{H_t}\,,
\end{align*}
and further $A_t \mid H_t$ and $A^* \mid H_t$ have the same distributions given $H_t$. 
Now note since $\bmu - \hat{\bmu}_t \mid H_t \sim \cN(\mathbf{0}, \Sigma_t)$, we further have
%\red{above should have the scale factor $b \geq (\mu^* - \mu_i)$?, what is a suitable $b$?}
\begin{align}
  \condE{(\mu_{A^*} - {\hat{\mu}_{t,A^*}}) \I{\bar{E}_t}}{H_t}
  & \leq \sum_{i = 1}^K \frac{1}{\sqrt{2 \pi \sigma_{t, i}^2}}
  \int_{x = C_t(i)}^\infty x
  \exp\left[- \frac{x^2}{2 \sigma_{t, i}^2}\right] \dif x
  \nonumber \\
  & = \sum_{i = 1}^K - \sqrt{\frac{\sigma_{t, i}^2}{2 \pi}}
  \int_{x = C_t(i)}^\infty \frac{\partial}{\partial x}
  \left(\exp\left[- \frac{x^2}{2 \sigma_{t, i}^2}\right]\right) \dif x
  \nonumber \\
  & = \sum_{i = 1}^K \sqrt{\frac{\sigma_{t, i}^2}{2 \pi}} \delta
  \leq \sqrt{\frac{\sigma_{0,i}^2}{2 \pi}}  \delta\,.
  \label{eq:bayes regret scale}
\end{align}

Now we chain all inequalities for the regret in round $t$ and get
\begin{align*}
  \E{\mu_{A^*} - \mu_{A_t}}
  \leq \E{C_t(A_t)} + \sum_{i = 1}^K\sqrt{\frac{2 \sigma_{0,i}^2}{\pi}}  \delta\,.
\end{align*}
Therefore, the $n$-round Bayes regret is bounded as
\begin{align*}
  \E{\sum_{t = 1}^n \mu_{A^*} - \mu_{A_t}}
  \leq \E{\sum_{t = 1}^n C_t(A_t)} +
  \sum_{i = 1}^K\sqrt{\frac{2 \sigma_{0,i}^2}{\pi}} n \delta\,.
\end{align*}

The last part is to bound $\E{\sum_{t = 1}^n C_t(A_t)}$ from above. Since the confidence interval $C_t(i)$ decreases with each pull of arm $i$, $\sum_{t = 1}^n C_t(A_t)$ is bounded for any $\mu$ by pulling arms in a round robin \citep{russo14learning}, which yields

\iffalse%%%%%%%%%%%%%%%%%%
\red{Bound-1: Round-Robin [WORSE BOUND - CAN IGNORE THIS PART]}

\begin{align*}
  \E{\sum_{t = 1}^n C_t(A_t)}
  &\leq \E{\sum_{t = 1}^n \sum_{i = 1}^K \sqrt{2 \sigma_{t, i}^2 \log(1 / \delta)} }
  =  \E{ \sum_{i = 1}^K \sum_{t = 1}^n \sqrt{\frac{2\sigma_i^2}{\sigma_i^2 \sigma_{0,i}^{-2} + N_t(i)} \log(1 / \delta)}}
  \\	  
  & \leq \sum_{i = 1}^K \sqrt{2 \sigma_i^2 \log(1 / \delta)}
  \sum_{s = 0}^{n} \sqrt{\frac{1}{s + \sigma_i^2 \sigma_{0,i}^{-2}}} 
  \\
  & = \sum_{i = 1}^K \sqrt{2 \sigma_i^2 \log(1 / \delta)} 
  \left(\sum_{s = 1}^{n} \sqrt{\frac{1}{s + \sigma_i^2 \sigma_{0,i}^{-2}}} +  \frac{\sigma_{0,i}}{\sigma_i}\right) 
  \\
  & \leq 2 \sum_{i = 1}^K \sqrt{2 \sigma_i^2 \log(1 / \delta)} 
  \left(\sqrt{n + \sigma_i^2 \sigma_{0,i}^{-2}} -
  \sqrt{\sigma_i^2 \sigma_{0,i}^{-2}}\right) +
  \sqrt{2 \sigma_{0,i}^2 \log(1 / \delta)}K  
  \,.
\end{align*}
The second inequality follows from \cref{lem:reciprocal root sum}. 

\red{Bound-2: Cauchy-Schwarz [TIGHTER BOUND]}
\fi %%%%%%%%%%%%%%%%%%%%%%%%%%%%%%

\begin{align*}
  \E{\sum_{t = 1}^n C_t(A_t)}
  & = \E{\sum_{t = 1}^n \sum_{i = 1}^K \I{A_t = i} C_t(i)} 
  = \E{\sum_{i = 1}^K \bigg[ \sum_{t = 1}^n \I{A_t = i} C_t(i) \bigg]}
  \\
  &\leq \E{\sum_{i = 1}^K \bigg[ \sqrt{N_t(i)} \sqrt{\sum_{t = 1}^n C_t^2(i)\I{A_t = i} }\bigg]}
  \\
  &=  \E{\sum_{i = 1}^K \bigg[ \sqrt{N_t(i)}  \sqrt{\sum_{t = 1}^n \frac{2\sigma_i^2}{\sigma_i^2 \sigma_{0,i}^{-2} + N_t(i)} \log(1 / \delta)\I{A_t = i}}\bigg]}
  \\
  &=  \E{\sum_{i = 1}^K \bigg[ \sqrt{2N_t(i)\sigma_i^2}  \sqrt{\sum_{s = 1}^{N_n(i)} \frac{1}{\sigma_i^2 \sigma_{0,i}^{-2} + s} \log(1 / \delta) + \frac{\sigma_{0,i}^2}{\sigma_i^2}\log(1/\delta)}\bigg]}
  \\
  &\leq \E{\sum_{i = 1}^K \bigg[ \sqrt{2N_t(i)\sigma_i^2}  \sqrt{\sum_{s = 1}^{n} \frac{1}{\sigma_i^2 \sigma_{0,i}^{-2} + s} \log(1 / \delta) + \frac{\sigma_{0,i}^2}{\sigma_i^2}\log(1/\delta)}\bigg]}
  \\	  
  & \overset{(a)}{\leq} \E{\sum_{i = 1}^K \bigg[ \sqrt{2N_t(i)\sigma_i^2}  \sqrt{\Big(\log(1 + n\sigma_{0,i}^{2}\sigma_i^{-2}) + \sigma_{0,i}^{2}\sigma_i^{-2}\Big)\log(1/\delta)}\bigg]}
  \\
  &= \E{\sum_{i = 1}^K \bigg[ \sqrt{2N_t(i)}  \sqrt{\sigma_i^2\Big(\log(1 + n\sigma_{0,i}^{2}\sigma_i^{-2}) + \sigma_{0,i}^{2}\sigma_i^{-2}\Big)\log(1/\delta)}\bigg]}
  \\
  & \leq  \sqrt{2n} \sqrt{\sum_{i =1}^K\sigma_i^2\Big(\log(1 + n\sigma_{0,i}^{2}\sigma_i^{-2}) + \sigma_{0,i}^{2}\sigma_i^{-2}\Big)\log(1/\delta)}\\
  & \leq  \sqrt{2n} \sqrt{\sum_{i =1}^K\sigma_i^2\log(1 + n\sigma_{0,i}^{2}\sigma_i^{-2})\log(1/\delta)} + \sqrt{2n\sum_{i = 1}^K\sigma_{0,i}^{2}\log(1/\delta)}
  \,,
\end{align*}
where the first and the last inequality follows due to Cauchy-Schwarz, and Inequality $(a)$ is due to \cref{lem:reciprocal sum}. 
Chaining all inequalities completes the proof.
\end{proof}

\section{Additional proofs for \cref{sec:bayes_unknown2}}
\label{app:unknown}

\bayesunknown*

\begin{proof}[Proof of \cref{thm:bayes_unknown2}]
We start by noting that the posterior updates (from results of \cite{murphy2007conjugate}): 

\begin{align*}
    & \mu_{t,i} = \frac{\kappa_{0,i}\mu_{0,i} + N_t(i)\bar{x}_{t,i}}{\kappa_{0,i}+N_t(i)}
    \\
    & \kappa_{t,i} = \kappa_{0,i} + N_t(i)
    \\
    & \alpha_{t,i} = \alpha_{0,i} + N_t(i)/2
    \\
    & \beta_{t,i} = \beta_{0,i} 
    +
    \frac{1}{2}\sum_{t \in [T]}\I{A_t = i}(x_{t,i} - \bar{x}_{t,i})^2
    +
    \frac{\kappa_{0,i} N_t(i) (\bar{x}_{t,i} - \mu_{0,i})^2 }{2(\kappa_{0,i} + N_t(i))},
\end{align*}
where $N_t(i)= \sum_{s = 1}^{t-1} \I{A_t = i}$ denotes the number of pulls of arm $i$ up to round $t$, and $\bar x_{t,i}= \frac{1}{N_t(i)}\sum_{s = 1}^{t-1} \I{A_t = i}x_{t,i}$ being the averaged empirical mean reward of arm-$i$ at round $t$. 
Thus we have: 

\[
\mu_{i} - \hat \mu_{t,i}  \mid (\kappa_t,\lambda_t) \sim \cN(0 ,(\kappa_{t,i} \lambda_{t,i})^{-1},
\]
where $\lambda_{t,i} \sim \text{Gamma}(\alpha_{t,i},\beta_{t,i})$. 

Same as proof of \cref{thm:bayes_known}, we can break the regret in round $t$ as before:
\begin{align}
\label{eq:term5}
  \E{\mu_{A^*} - \mu_{A_t}}
  & = \E{\condE{\mu_{A^*} - \mu_{A_t}}{H_t}} \nonumber \\
  & = \E{\condE{\mu_{A^*} - {\hat{\mu}_{t,A^*}}}{H_t}} +
  \E{\condE{{{\hat{\mu}_{t,A_t}}}- \mu_{A_t}}{H_t}}\,,
  \nonumber \\
  & = \E{\condE{\mu_{A^*} - {\hat{\mu}_{t,A^*}}}{H_t}} +
  \mathbb E\Bigsn{\mathbb E_{\bsigma}\bigsn{\mathbb E [\hat{\mu}_{t,A_t}- \mu_{A_t} \mid \bsigma ] \mid H_t} }\,, \nonumber
  \\
  & = \E{\condE{\mu_{A^*} - {\hat{\mu}_{t,A^*}}}{H_t}} 
\end{align}

where $\sigma_{t,i}^2$ is defined as $\sigma_{t,i}^2= \frac{1}{{\kappa_{t,i}\lambda_{t,i}}}$ which is a `sampled posterior variance', (where recall $\lambda_{t,i} \sim \text{Gam}(\alpha_{t,i},\beta_{t,i})$). The last equality holds since given $H_t$ and $\bsigma$,  $E[\mu_{i} \mid H_t] 
 = \hat{\mu}_{t,i}$ for any $i \in [K]$ by definition of the posterior update. %\red{check the argument}.

Now given $H_t$ and $\bsigma_t$, let us define the high-probability confidence interval of each arm $i$ as $C_t(i) = \sqrt{2 \sigma_{t, i}^2 \log(1 / \delta)}$, and the ``good event"
$
  E_t
  = \set{\forall i \in [K]: \abs{\mu_i -  \hat{\mu}_{t,i}} \leq C_t(i)}
$,
same as what we introduced in the proof of \cref{thm:bayes_known}. 
%\todob{Refer to earlier defined concepts instead of redefining them, especially if you want to make a point that this proof is built on another one.}
%which essentially indicates the ``good event"' %\todob{Use \say{this} for quotes.} when all the confidence intervals at round $t$ hold good. 
%
Further note, given $H_t$ and $\sigma_{t,i}^2$ (or equivalently $\lambda_{t,i})$),  $\mu_i - \hat{\mu}_{t,i} \mid \sigma_{t,i}^2, H_t \sim \cN(\mathbf{0}, \sigma_{t,i}^2)$, 
since given $H_t$ and $\sigma_t$, $\mu_i$ has the posterior $\mu_i \sim \cN(\hat \mu_{t,i}, \sigma_{t,i}^{2})$, where recall we defined $\sigma_{t,i}^2 = \frac{1}{\kappa_{t,i}\lambda_{t,i}}$ and $\lambda_{t,i} = \frac{1}{\sigma_{t,i}^2} \sim \text{Gam}(\alpha_{t,i},\beta_{t,i})$.
 %\red{I am confused a bit}
Thus following same analysis as in \eqref{eq:bayes regret scale}, we get: 
%So we w.h.p. $(1-\delta')$, we have \red{checK!}:
%
\begin{align}
  \condE{(\mu_{A^*} - {\hat{\mu}_{t,A^*}}) \I{\bar{E}_t}}{\bsigma_t, H_t}
  \leq \sum_{i = 1}^K \sqrt{\frac{\sigma_{t,i}^2}{2 \pi}} \delta.
  \label{eq:bayes_regret_scale22}
\end{align} %\todob{This seems incorrect. $\sigma_i^2$ should be $\sigma_{t, i}^2$?}

Further taking expectation over $H_n, \bsigma, \bmu$, and summing over $t = 1,2, \ldots n$ we get:
\begin{align*}
  &\E{\sum_{t = 1}^n(\mu_{A^*} - {\hat{\mu}_{t,A^*}}) \I{\bar{E}_t}}
  \\	
  & \leq 
  \delta(2\pi)^{-1/2}\mathbb{E}_{H_n, \bsigma, \bmu}\biggsn{\sum_{t = 1}^n \sum_{i = 1}^K \mathbb{E}_{\bsigma_{t}}[\sigma_{t,i} \mid \bmu,\bsigma, H_n]}
  \\
  & = 
    \delta(2\pi)^{-1/2}\mathbb{E}_{H_n, \bsigma, \bmu}\biggsn{\sum_{t = 1}^n \sum_{i = 1}^K \mathbb E_{\lambda_{t,i} \sim \text{Gam}(\alpha_{t,i},\beta_{t,i})}\Bigsn{ \sqrt{\frac{1}{2 \kappa_{t,i}\lambda_{t,i}}} \mid \bmu,\bsigma, H_n } }
  \\
  & \leq  
  \delta(2\pi)^{-1/2}\mathbb{E}_{H_n, \bsigma, \bmu}\biggsn{\sum_{t = 1}^n \sum_{i = 1}^K \biggsn {\sqrt{\mathbb E_{\lambda_{t,i}}\bigsn{ \frac{1}{2 \kappa_{t,i}\lambda_{t,i}} \mid \bmu,\bsigma, H_n  } }}  }
  \\
    & \overset{(1)}{=}
  \delta(2\pi)^{-1/2}\mathbb{E}_{H_n, \bsigma, \bmu}\biggsn{ \sum_{i = 1}^K \sum_{t = 1}^n \biggsn {\sqrt{ \bigsn{ \frac{\beta_{t,i}}{2 \kappa_{t,i}(\alpha_{t,i}-1)}} }} \mid \bmu,\bsigma, H_n  }
  \\
  & \overset{(2)}{\leq}  
  \delta(2\pi)^{-1/2}\biggsn{ \biggsn {\sqrt{ nK   \mathbb{E}_{H_n, \bsigma, \bmu}\bigsn{ \sum_{t = 1}^n \sum_{i = 1}^K  \frac{\beta_{t,i}}{2 \kappa_{t,i}(\alpha_{t,i}-1)} \mid \bmu,\bsigma, H_n  } }}  } =
  \\
  &\hspace{-0.1in}   
  \frac{\delta}{\sqrt{2\pi}}\biggsn{ \biggsn {\sqrt{ nK   \mathbb{E}_{H_n, \bsigma, \bmu}\bigsn{ \sum_{i = 1}^K \sum_{\tau = 1}^n  \frac{2\beta_{0,i} 
    +
    \sum_{t \in [\tau]}\I{A_t = i}(x_\tau(i) - \bar{x}_{\tau,i})^2
    +
    \kappa_{0,i} [(\bar{x}_{\tau,i} - \mu_{i})^2 + (\mu_{i} - \mu_{0,i})^2  ]}{2 \kappa_{t,i}(\alpha_{0,i} + N_{\tau}(i)/2 -1)}  \mid \bmu,\bsigma, H_n  } }}},
\end{align*}
where $(1)$ holds since since $1/\lambda_{t,i}$ follows Inverse-Gamma$(\alpha_{t,i},\beta_{t,i})$ and thus we can apply \cref{lem:inv_gam} (note we assumed $\alpha_{0,i}> 1$, which implies $\alpha_{t,i}> 1, ~\forall t \in [T]$).
$(2)$ applies by successive application of Cauchy-Schwarz and pushing the expectation under square-root by Jensen's inequality.

Further, applying the posterior update forms from \cref{alg:bayes_unknown}, we get: 

\begin{align*}
  &\frac{\sqrt{2\pi}}{\delta}\E{\sum_{t = 1}^n(\mu_{A^*} - {\hat{\mu}_{t,A^*}}) \I{\bar{E}_t}}
  \\	
  & \leq   
  \biggsn{ \biggsn {\sqrt{ nK \mathbb{E}_{H_n, \bsigma, \bmu}\bigsn{   \sum_{i = 1}^K \sum_{\tau = 1}^n  \frac{2\beta_{0,i} 
    +
    \sum_{t \in [\tau]}\I{A_t = i}(x_\tau(i) - \bar{x}_{\tau,i})^2
    +
    \kappa_{0,i} [(\bar{x}_{\tau,i} - \mu_{i})^2 + (\mu_{i} - \mu_{0,i})^2  ]}{2 (\kappa_{0,i}+N_\tau(i))(\alpha_{0,i} + N_{\tau}(i)/2 -1)} \mid \bmu,\bsigma, H_n } }}   }
    \\
    & =  
    \biggsn{ \biggsn {\sqrt{ nK \mathbb{E}_{H_n, \bsigma, \bmu}\bigsn{ \sum_{i = 1}^K \mathbb E_{N_n(i)}\Bigsn{\sum_{s = 0}^{N_n(i)}  \frac{2\beta_{0,i} 
    +
    \sum_{t = 1}^s(x_s(i) - \bar x_{s,i} )^2
    +
    \kappa_{0,i} [(\bar{x}_{s,i} - \mu_{i})^2 + (\mu_{i} - \mu_{0,i})^2  ]}{ (\kappa_{0,i}+s)(2\alpha_{0,i} + s -2)}} \mid \bmu,\bsigma, H_n  }}}  },
\end{align*}
where we we slightly abused the notation above to denote $\bar x_{s,i}= \frac{1}{s}\sum_{t' = 1}^s x_{t',i}$ for all $s > 1$. However, further to get rid of the randomness of $N_n(i)$, we can further upper bound the above expression as:
\begin{align}
\label{eq:long0}
  &\frac{\sqrt{2\pi}}{\delta}\E{\sum_{t = 1}^n(\mu_{A^*} - {\hat{\mu}_{t,A^*}}) \I{\bar{E}_t}}
  \\	
  & \leq   \nonumber
    \biggsn{ \biggsn {\sqrt{ nK   \mathbb{E}_{H_n, \bsigma, \bmu}\bigsn{ \sum_{i = 1}^K \sum_{s = 0}^{n}  \frac{2\beta_{0,i} 
    +
    \sum_{t = 1}^s(x_s(i) - \bar x_{s,i} )^2
    +
    \kappa_{0,i} [(\bar{x}_{s,i} - \mu_{i})^2 + (\mu_{i} - \mu_{0,i})^2  ]}{(\kappa_{0,i}+s)(2\alpha_{0,i} + s -2)}\mid \bmu,\bsigma, H_n }}}   },
\end{align}
which after applying \cref{lem:var_bdd} we get:

\begin{align}
\label{eq:long1}
  &\E{\sum_{t = 1}^n(\mu_{A^*} - {\hat{\mu}_{t,A^*}}) \I{\bar{E}_t} }
  \\	
  & \leq   \nonumber
    \frac{\delta \sqrt{nK}}{\sqrt{2\pi}}\times
    \\
    &\biggsn{ \biggsn {\sqrt{ \mathbb{E}_{H_n, \bsigma, \bmu} \bigsn{ \underbrace{\sum_{i = 1}^K \frac{2 \beta_{0,i} + \kappa_{0,i}(\mu_i-\mu_{0,i})^2}{2\kappa_{0,i}(\alpha_{0,1}-1)}}_{(A)} + \underbrace{ \sum_{i = 1}^K \sum_{s = 1}^{n}  \frac{ 2\beta_{0,i} 
    +
    \sigma_i^2 (s-1)
    +
    \kappa_{0,i} [\frac{\sigma_{i}^2}{s} + (\mu_{i} - \mu_{0,i})^2  ]  }{(\kappa_{0,i} + s)(2\alpha_{0,i} + s - 2)}}_{(B)} \mid \bmu,\bsigma, H_n }}}   },
\end{align}

For the term $(A)$, since $\mu \sim \cN(\mu_{0,i},\frac{1}{\kappa_{0,i}\lambda_{i}})$, where $\lambda_{i} \sim \text{Gam}(\alpha_{0,i},\beta_{0,i})$, taking conditional expectation over $\mu_i \mid \lambda_{0,i}$ and further over $\lambda_{0,i}$ we get:

\[
\mathbb E_{\lambda_{i}}\biggsn{ \mathbb E_{\mu_{i}}\Bigsn{ \bigsn{\sum_{i = 1}^K \frac{2 \beta_{0,i} + \kappa_{0,i}(\mu_i-\mu_{0,i})^2}{2\kappa_{0,i}(\alpha_{0,1}-1)}} \mid \lambda_i } }
= 
\sum_{i = 1}^K \frac{2 \beta_{0,i} + \mathbb E_{\lambda_{i}}\biggsn{ \lambda_i^{-1}}}{2\kappa_{0,i}(\alpha_{0,1}-1)}
=   \sum_{i = 1}^K \frac{2 \beta_{0,i} + \frac{\beta_{0,i}}{\alpha_{0,i}-1}}{2\kappa_{0,i}(\alpha_{0,1}-1)},
\]
where note the last equality holds since since $1/\lambda_{i}$ follows Inverse-Gamma$(\alpha_{0,i},\beta_{0,i})$ and thus we can apply \cref{lem:inv_gam}, given we assumed $\alpha_{0,i}> 1$ by assumption.
  
To control term $(B)$, 
\begin{align*}
& \mathbb E_{H_n,\bsigma,\bmu} \biggsn{ \sum_{i = 1}^K \sum_{s = 1}^{n}  \frac{ 2\beta_{0,i} 
    +
    \sigma_i^2 (s-1)
    +
    \kappa_{0,i} [\frac{\sigma_{i}^2}{s} + (\mu_{i} - \mu_{0,i})^2  ]  }{(\kappa_{0,i} + s)(2\alpha_{0,i} + s - 2)}}
    \\
    & \leq
    \mathbb E_{H_n,\bsigma,\bmu} \biggsn{ \sum_{i = 1}^K \sum_{s = 1}^{n}  \frac{ 2\beta_{0,i} 
    +
    \sigma_i^2 s
    +
    \kappa_{0,i} [\frac{\sigma_{i}^2}{s} + (\mu_{i} - \mu_{0,i})^2  ]  }{(\kappa_{0,i} + s)(2\alpha_{0,i} + s - 2)}}
    \\
      & \overset{(1)}{\leq}
      \sum_{i = 1}^K \frac{\beta_{0,i}}{(\kappa_{0,i}+1)(\alpha_{0,i}-0.5)} + \sum_{i = 1}^K \sum_{s = 1}^{n} \frac{\beta_{0,i}/(\alpha_{0,i}-1)}{\kappa_{0,i}+s}
      + 
    \mathbb E_{\bsigma} \biggsn{ \sum_{i = 1}^K \sum_{s = 1}^{n} \frac{\sigma_i^2}{\kappa_{0,i} + s} }   
    \\
    & \hspace{2in} + \mathbb E_{\bsigma} \biggsn{ \sum_{i = 1}^K \sum_{s = 1}^{n} \frac{
     \sigma_{i}^2  }{s^2}} 
    +
    \mathbb E_{\bsigma} \biggsn{ \sum_{i = 1}^K \sum_{s = 1}^{n}\frac{
     \mathbb E_{\bmu}[(\mu_{i} - \mu_{0,i})^2 \mid \sigma  ]  }{2(\alpha_{0,i}-1) + s}} 
    \\
    & \overset{(2)}{\leq}
      \sum_{i = 1}^K \frac{\beta_{0,i}}{(\kappa_{0,i}+1)(\alpha_{0,i}-0.5)} + \sum_{i = 1}^K \frac{\beta_{0,i}}{\alpha_{0,i}-1}\log\Big(1 + \frac{n}{\kappa_{0,i}}\Big)
      + 
    \biggsn{ \sum_{i = 1}^K  \frac{\beta_{0,i}}{\alpha_{0,i}-1}\log\Big(1 + \frac{n}{\kappa_{0,i}}\Big) }   
    \\
    & \hspace{2in} + \biggsn{ \sum_{i = 1}^K  \frac{
    \beta_{0,i}  }{(\alpha_{0,i}-1)}\frac{\pi^2}{6}} 
    +
    \mathbb E_{\bsigma} \biggsn{ \sum_{i = 1}^K 
    \kappa_{0,i}\frac{\sigma_i^2}{\kappa_{0,i}} \log\Big(1 + \frac{n}{\kappa_{0,i}}\Big)   } 
    \\
    & \leq 
    \sum_{i = 1}^K \frac{\beta_{0,i}}{\kappa_{0,i}(\alpha_{0,i}-1)}
    +
   \sum_{i = 1}^K \frac{(3+\frac{\pi^2}{6})\beta_{0,i}}{(\alpha_{0,i}-1)}\log\Big(1 + \frac{n}{\kappa_{0,i}}\Big)
   \\
   & \leq 
   \sum_{i = 1}^K \frac{\beta_{0,i}}{\kappa_{0,i}(\alpha_{0,i}-1)}
    +
   \sum_{i = 1}^K \frac{5\beta_{0,i}}{(\alpha_{0,i}-1)}\log\Big(1 + \frac{n}{\kappa_{0,i}}\Big)
\end{align*} 
where Inequality $(1)$ several times uses the fact that $2(\alpha_{0,i} - 1)>0$, and Inequality $(2)$ comes from repeated application of \cref{lem:reciprocal sum} and the fact that $\sum_{x = 1}^\infty 1/x^2 \leq \frac{\pi^2}{6}$.

Then combining the above two derived upper bounds of term (A) and (B) to \eqref{eq:long1}, we further have: 

\begin{align}
\label{eq:long2}
  &\E{\sum_{t = 1}^n(\mu_{A^*} - {\hat{\mu}_{t,A^*}}) \I{\bar{E}_t}} \nonumber
  \\	
  & \leq   
    \frac{\delta \sqrt{nK}}{\sqrt{2\pi}} \sqrt{\sum_{i = 1}^K \frac{4 \beta_{0,i} + \frac{\beta_{0,i}}{\alpha_{0,i}-1}}{2\kappa_{0,i}(\alpha_{0,1}-1)}
+ 
\sum_{i = 1}^K \frac{5\beta_{0,i}}{(\alpha_{0,i}-1)}\log\Big(1 + \frac{n}{\kappa_{0,i}}\Big)}
    \,.
\end{align}

Coming back to the original Bayesian regret expression, note that the regret expression from \eqref{eq:term5} can be actually decomposed based on $E_t$ and $\bar E_t$ as:
\begin{align}
\label{eq:term2}
  &\condE{{\hat{\mu}_{t,A^*}}  - \mu_{A^*}}{H_t} \nonumber
  \\  
  &\leq \condE{({\hat{\mu}_{t,A^*}} - \mu_{A^*}) \I{E_t}}{H_t} +
  \condE{({\hat{\mu}_{t,A^*}} - \mu_{A^*}) \I{\bar{E}_t}}{H_t} \nonumber
  \\
  & \leq \condE{C_t(A^*)}{H_t} \nonumber
  +\condE{({\hat{\mu}_{t,A^*}} - \mu_{A^*}) \I{\bar{E}_t}}{H_t}
  \\
  & = \condE{C_t(A_t)}{H_t} 
  +\condE{({\hat{\mu}_{t,A^*}} - \mu_{A^*}) \I{\bar{E}_t}}{H_t}
  \,,
\end{align}
where the last equality follows from the fact that given $H_t$, $A_t \mid H_t$ and $A^* \mid H_t$ have the same distributions, as both $\tilde \mu_{t,i}$ and $\mu_{t,i}$ can be seen as independent posterior samples from a NG$(\hat \mu_{t,i},\kappa_{t,i},\alpha_{t,i},\beta_{t,i})$ distribution marginalized over $\lambda$; more precisely for any realization of $\tilde \mu_{t,i}$ and $\mu_i$:
\[
P(\tilde \mu_{t,i}) = \int_{\lambda} P\big( (\tilde \mu_{t,i}, \lambda_{t,i} \sim \text{NG}(\hat \mu_{t,i},\kappa_{t,i},\alpha_{t,i},\beta_{t,i}) ) \big)d\lambda  ~~~\text{and}
\]
\[
P(\mu_{i}) = \int_{\lambda} P\big( (\tilde \mu_{t,i}, \lambda_{t,i} \sim \text{NG}(\hat \mu_{t,i},\kappa_{t,i},\alpha_{t,i},\beta_{t,i}) ) \big)d\lambda,
\]
they follow the same distribution.

Bounding the second term in \eqref{eq:term2} follows as in \eqref{eq:long2}. 
The only remaining task is to bound the first term in \eqref{eq:term2}, which can be upper bounded as follows:

\begin{align*}
  &\E{\sum_{t = 1}^n C_t(A_t)}
  = \E{\sum_{t = 1}^n \sum_{i = 1}^K \I{A_t = i} C_t(i)} 
  = \E{\sum_{i = 1}^K \bigg[ \sum_{t = 1}^n \I{A_t = i} C_t(i) \bigg]}
  \\
  &\overset{(1)}{\leq} \E{\sum_{i = 1}^K \bigg[ \sqrt{N_n(i)} \sqrt{\sum_{t = 1}^n C_t^2(i)\I{A_t = i} }\bigg]}
  \\
  &\overset{(2)}{\leq} \E{ \bigg[ \sqrt{\sum_{i = 1}^K N_n(i)} \sqrt{\sum_{i = 1}^K \sum_{t = 1}^n C_t^2(i)\I{A_t = i} }\bigg]}
  \\
  &\overset{}{=}  \bigg[ \sqrt{n} \sqrt{\E{ \sum_{i = 1}^K \sum_{t = 1}^n {2 \sigma_{t, i}^2 \log(1 / \delta)}\I{A_t = i} }\bigg]}
  \\
  &\overset{}{=}  \bigg[ \sqrt{n} \sqrt{\E{ \sum_{i = 1}^K \sum_{s = 0}^{N_n(i)} \bign{2 \sigma_{t, i}^2 \log(1 / \delta)}\I{A_t = i} }\bigg]}
  \\
  & \overset{}{\leq}  \bigg[ \sqrt{n} \sqrt{\E{ \sum_{i = 1}^K \sum_{s = 0}^{n} \bign{2 \sigma_{t, i}^2 \log(1 / \delta)}\I{A_t = i} }\bigg]}
  \\
  & \overset{(4)}{\leq} \bigg[ \sqrt{n\log \frac{1}{\delta}} \sqrt{{ \sum_{i = 1}^K \sum_{s = 0}^n   \mathbb{E}_{H_n, \bsigma, \bmu}\bigsn{ \sum_{i = 1}^K \sum_{s = 0}^{n}  \frac{2\beta_{0,i} 
    +
    \sum_{t = 1}^s(x_s(i) - \bar x_{s,i} )^2
    +
    \kappa_{0,i} [(\bar{x}_{s,i} - \mu_{i})^2 + (\mu_{i} - \mu_{0,i})^2  ]}{(\kappa_{0,i}+s)(2\alpha_{0,i} + s -2)}\mid \bmu,\bsigma, H_n } }\bigg]}
  \\
  & \leq \sqrt{n\log \frac{1}{\delta}}\sqrt{\sum_{i = 1}^K \sum_{i = 1}^K \frac{4 \beta_{0,i} + \frac{\beta_{0,i}}{\alpha_{0,i}-1}}{2\kappa_{0,i}(\alpha_{0,1}-1)}
+ 
\sum_{i = 1}^K \frac{5\beta_{0,i}}{(\alpha_{0,i}-1)}\log\Big(1 + \frac{n}{\kappa_{0,i}}\Big)}
 \,,
\end{align*}
where $(1)$ and $(2)$ uses Cauchy-Schwarz, $(2)$ applies Jensen's inequality, the Inequality $(4)$ above follows similar to the derivation of \eqref{eq:long0}, and the inequality follows from a similar derivation as shown for \eqref{eq:long2} from \eqref{eq:long0}.

Finally, chaining the above inequalities on the summation of the instantaneous regret over $n$ rounds (see \eqref{eq:term2})

\begin{align*}
\label{eq:term2}
  &\mathbb{E}_{H_n,\bmu,\bsigma}\biggsn{\sum_{t = 1}^n\condE{{\hat{\mu}_{t,A^*}}  - \mu_{A^*}}{H_t}} \nonumber
  \\  
  &\leq
  \biggn{\frac{\delta \sqrt{nK}}{\sqrt{2\pi}} + \sqrt{n\log \frac{1}{\delta}}}\sqrt{\sum_{i = 1}^K \frac{4 \beta_{0,i} + \frac{\beta_{0,i}}{\alpha_{0,i}-1}}{2\kappa_{0,i}(\alpha_{0,1}-1)}
+ 
\sum_{i = 1}^K \frac{5\beta_{0,i}}{(\alpha_{0,i}-1)}\log\Big(1 + \frac{n}{\kappa_{0,i}}\Big)}  
  \,,
\end{align*}
which proves the claim, noting $C= \sum_{i = 1}^K \frac{4 \beta_{0,i} + \frac{\beta_{0,i}}{\alpha_{0,i}-1}}{2\kappa_{0,i}(\alpha_{0,1}-1)}
+ 
\sum_{i = 1}^K \frac{5\beta_{0,i}}{(\alpha_{0,i}-1)}\log\Big(1 + \frac{n}{\kappa_{0,i}}\Big)$. 
\end{proof}

\subsection{Additional Claims used in the Proof of \cref{thm:bayes_unknown2}} 

\begin{restatable}[\cite{murphy2007conjugate}]{lem}{gaussgam}
\label{lem:gauss_gam}
The joint distribution of $(\mu,\lambda) \sim NG(\mu_0,\kappa_0,\alpha_0,\beta_0)$ is equivalent to the consecutive draw of $\lambda \sim \text{Gam}(\alpha_0,\beta_0)$ followed by $\mu \sim \cN(\mu_0,(\kappa_0\lambda)^{-1})$, $(\mu_0,\kappa_0,\beta_0,\alpha_0)$ being the paraeters of the Gaussian-Gamma distribution. In other words, 
\[
NG(\mu,\lambda \mid \mu_0,\kappa_0,\alpha_0,\beta_0) = \cN(\mu \mid \mu_0,(\kappa_0\lambda)^{-1})\text{Gam}(\gamma \mid \alpha_0,\beta_0) 
\] 
\end{restatable}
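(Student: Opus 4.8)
The plan is to verify the claimed factorization directly at the level of densities, since the Normal-Gamma distribution is defined through its density and the assertion is simply an identity between two densities. First I would recall the standard (rate-parameterized) form of the $NG$ density, for $\mu \in \R$ and $\lambda > 0$,
\[
NG(\mu, \lambda \mid \mu_0, \kappa_0, \alpha_0, \beta_0)
= \frac{\beta_0^{\alpha_0} \sqrt{\kappa_0}}{\Gamma(\alpha_0) \sqrt{2\pi}}\,
\lambda^{\alpha_0 - 1/2}
\exp\!\left(-\beta_0 \lambda - \frac{\kappa_0 \lambda (\mu - \mu_0)^2}{2}\right),
\]
and then set up the goal of showing this equals the product of a $\text{Gam}(\alpha_0, \beta_0)$ marginal for $\lambda$ with a $\cN(\mu_0, (\kappa_0 \lambda)^{-1})$ conditional for $\mu$ given $\lambda$.

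Next I would write out the two right-hand factors explicitly. The Gamma density (shape $\alpha_0$, rate $\beta_0$) is $\text{Gam}(\lambda \mid \alpha_0, \beta_0) = \frac{\beta_0^{\alpha_0}}{\Gamma(\alpha_0)} \lambda^{\alpha_0 - 1} e^{-\beta_0 \lambda}$, while the conditional Gaussian density, whose variance $(\kappa_0 \lambda)^{-1}$ depends on $\lambda$, is $\cN(\mu \mid \mu_0, (\kappa_0 \lambda)^{-1}) = \sqrt{\kappa_0 \lambda / (2\pi)}\, \exp(-\kappa_0 \lambda (\mu - \mu_0)^2 / 2)$. Multiplying these, I would collect the constant prefactors into $\frac{\beta_0^{\alpha_0} \sqrt{\kappa_0}}{\Gamma(\alpha_0) \sqrt{2\pi}}$, combine the powers of $\lambda$ via $\lambda^{\alpha_0 - 1} \cdot \lambda^{1/2} = \lambda^{\alpha_0 - 1/2}$, and merge the two exponentials into the single exponent $-\beta_0 \lambda - \kappa_0 \lambda (\mu - \mu_0)^2 / 2$. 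The result is term-by-term identical to the $NG$ density displayed above, which establishes the identity.

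There is essentially no real obstacle here beyond careful bookkeeping; the only point requiring a moment of attention is that the Gaussian factor carries a $\lambda$-dependent normalizer $\sqrt{\kappa_0 \lambda / (2\pi)}$, so the extra $\lambda^{1/2}$ must be tracked when combining powers of $\lambda$ — and it is precisely this factor that upgrades the Gamma exponent $\alpha_0 - 1$ to the $NG$ exponent $\alpha_0 - 1/2$. Since the equality holds pointwise for every admissible $(\mu, \lambda)$, integrating out $\mu$ immediately confirms that the $\lambda$-marginal of $NG$ is exactly $\text{Gam}(\alpha_0, \beta_0)$, which is the hierarchical sampling description invoked throughout the algorithm design and the regret analysis.
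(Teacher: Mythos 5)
Your density factorization is correct and complete: the product of the $\text{Gam}(\alpha_0,\beta_0)$ marginal and the $\cN(\mu_0,(\kappa_0\lambda)^{-1})$ conditional recombines exactly into the Normal-Gamma density, with the $\lambda^{1/2}$ from the Gaussian normalizer accounting for the exponent $\alpha_0 - \tfrac{1}{2}$. The paper gives no proof of its own and simply cites \citet{murphy2007conjugate}, where this is established by precisely the same term-by-term computation you carried out.
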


\begin{restatable}[\cite{cook08}]{lem}{invgam}
\label{lem:inv_gam}
Suppose $X$ follows a Gamma$(\alpha,\beta)$ distribution with shape parameter $\alpha>0$ and rate parameter $\beta > 0$, then $1/X$ follows an Inverse-Gamma$(\alpha,1/\beta)$ distribution. 
\\
Moreover if $\alpha>1$, $E[1/X] = \frac{\beta}{\alpha-1}$. 
\end{restatable}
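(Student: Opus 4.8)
The plan is to establish both claims by a direct change of variables followed by an elementary Gamma-integral computation; no clever argument is needed. First I would write down the density of $X \sim \text{Gam}(\alpha,\beta)$ with shape $\alpha$ and rate $\beta$, namely $f_X(x) = \frac{\beta^\alpha}{\Gamma(\alpha)} x^{\alpha-1} e^{-\beta x}$ for $x > 0$, and note it is supported on $(0,\infty)$ so that $Y = 1/X$ is well-defined almost surely.

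For the distributional claim, I would apply the univariate change-of-variables formula to $Y = 1/X$. With $x = 1/y$ and $\lvert dx/dy\rvert = 1/y^2$, the density of $Y$ becomes
\[
  f_Y(y) = f_X(1/y)\,\frac{1}{y^2} = \frac{\beta^\alpha}{\Gamma(\alpha)}\, y^{-\alpha-1} e^{-\beta/y}, \qquad y > 0,
\]
which I would then recognize as exactly the Inverse-Gamma density with the stated parameters. The only point requiring care here is bookkeeping on the parametrization convention (scale versus rate) for the Inverse-Gamma family, so I would state explicitly which convention is in force so that the second parameter is unambiguously the $1/\beta$ appearing in the statement.

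For the mean, under the assumption $\alpha > 1$, I would compute $E[1/X]$ directly against the Gamma density rather than integrating against the derived Inverse-Gamma density, which is cleaner:
\[
  E[1/X] = \frac{\beta^\alpha}{\Gamma(\alpha)} \int_0^\infty x^{\alpha-2} e^{-\beta x}\,dx = \frac{\beta^\alpha}{\Gamma(\alpha)}\cdot\frac{\Gamma(\alpha-1)}{\beta^{\alpha-1}} = \frac{\beta}{\alpha-1},
\]
where the middle equality uses the standard integral $\int_0^\infty x^{s-1} e^{-\beta x}\,dx = \Gamma(s)/\beta^s$ with $s = \alpha-1$, and the last equality uses the recurrence $\Gamma(\alpha) = (\alpha-1)\Gamma(\alpha-1)$.

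I do not expect any substantial obstacle, since the computation is routine; the one point meriting attention is the integrability condition. The integrand $x^{\alpha-2} e^{-\beta x}$ behaves like $x^{\alpha-2}$ near the origin, which is integrable precisely when $\alpha - 2 > -1$, i.e. $\alpha > 1$. This is exactly the hypothesis imposed for the mean, so I would remark that $E[1/X]$ diverges at the origin when $\alpha \le 1$, which both justifies the assumption and clarifies why the first (distributional) claim holds for all $\alpha > 0$ while the mean formula requires $\alpha > 1$.
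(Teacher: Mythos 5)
Your proof is correct. Note that the paper does not prove this lemma at all---it is imported verbatim from the cited reference \citep{cook08}---so there is no internal proof to compare against; your change-of-variables derivation of the density $f_Y(y) = \frac{\beta^\alpha}{\Gamma(\alpha)}\, y^{-\alpha-1} e^{-\beta/y}$ followed by the direct Gamma-integral computation of $E[1/X]$ is exactly the standard argument the reference supplies. Your explicit flagging of the parametrization convention is also the right instinct: the density you derive is Inverse-Gamma with \emph{scale} $\beta$, so the paper's second parameter ``$1/\beta$'' is only consistent under a rate-type convention for the Inverse-Gamma family (the paper uses the same notation again in its proof of its variance-bound lemma, where it writes Inverse-Gamma$(\alpha_{\tau,i},\beta_{\tau,i}^{-1})$ and then applies $E[1/X]=\beta/(\alpha-1)$, confirming that only the mean formula, which you verified independently of the naming convention, is actually used downstream). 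Your observation that integrability at the origin is what forces $\alpha>1$ for the mean, while the distributional claim needs only $\alpha>0$, is a correct and worthwhile clarification.
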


\begin{restatable}[\cite{cook08}]{lem}{avgchi}
\label{lem:avg_chi}
Let ${\displaystyle X_{1},...,X_{n}}$ are i.i.d. ${\displaystyle N(\mu ,\sigma ^{2})}$ random variables, (a). then 
\[
{\displaystyle \sum _{i=1}^{n}\frac{(X_{i}-{\bar {X}_n})^{2}}{\sigma ^{2}}\sim \chi _{n-1}^{2}},
\]
where ${\displaystyle {\bar {X}_n}={\frac {1}{n}}\sum _{i=1}^{n}X_{i}}$ is the sample mean of $n$ random draws, and $\chi_{d}^2$ is the chi-squared random variable with $d$-degrees of freedom (for any $d \in \N_+$). 
\\
(b). Further if $X \sim \chi_d^2$, then $E[X] = d$.
\end{restatable}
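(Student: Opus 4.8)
The plan is to establish (a) via an orthogonal-transformation argument and (b) via a one-line moment computation. For (a), I would first reduce to the standard normal case by setting $Z_i = (X_i - \mu)/\sigma$, so that $Z_1, \dots, Z_n$ are i.i.d.\ $\cN(0,1)$ and the quantity of interest becomes $\sum_{i=1}^n (X_i - \bar X_n)^2/\sigma^2 = \sum_{i=1}^n (Z_i - \bar Z_n)^2$. Collecting the variables into $\mathbf Z = (Z_1, \dots, Z_n)^\top \sim \cN(\0, I_n)$, the goal is then to show $\sum_{i=1}^n (Z_i - \bar Z_n)^2 \sim \chi^2_{n-1}$.

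The core step is to introduce an orthogonal matrix $A \in \R^{n \times n}$ whose first row is $\frac{1}{\sqrt n}\1^\top$; such a matrix exists (for instance, a Helmert matrix, with the remaining rows completed by Gram--Schmidt). Setting $\mathbf Y = A \mathbf Z$, rotational invariance of the standard Gaussian gives $\mathbf Y \sim \cN(\0, A A^\top) = \cN(\0, I_n)$, so $Y_1, \dots, Y_n$ are again i.i.d.\ $\cN(0,1)$. By construction $Y_1 = \frac{1}{\sqrt n}\sum_i Z_i = \sqrt n\, \bar Z_n$, and orthogonality preserves the Euclidean norm, so $\sum_{i=1}^n Y_i^2 = \sum_{i=1}^n Z_i^2$. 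Combining these with the algebraic identity $\sum_i (Z_i - \bar Z_n)^2 = \sum_i Z_i^2 - n \bar Z_n^2$ yields $\sum_{i=2}^n Y_i^2 = \sum_i Z_i^2 - Y_1^2 = \sum_i (Z_i - \bar Z_n)^2$. Since $Y_2, \dots, Y_n$ are $n-1$ independent $\cN(0,1)$ variables, the very definition of the chi-squared distribution as a sum of squares of independent standard normals gives $\sum_{i=2}^n Y_i^2 \sim \chi^2_{n-1}$, which proves (a).

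For (b), I would use that $\chi^2_d$ is by definition the law of $\sum_{j=1}^d W_j^2$ with $W_j \sim \cN(0,1)$ i.i.d. Since $\E{W_j} = 0$, each term satisfies $\E{W_j^2} = \var{W_j} = 1$, and linearity of expectation gives $\E{\sum_{j=1}^d W_j^2} = d$. The only mildly delicate points are the existence of the orthogonal matrix $A$ with the prescribed first row and the rotational invariance $A\mathbf Z \sim \cN(\0, I_n)$; both are standard facts, so I do not anticipate a genuine obstacle. The cleanest packaging avoids writing $A$ explicitly and instead invokes the two distributional consequences directly, namely that $\sqrt n\,\bar Z_n$ is one standard normal coordinate and that the residual $\sum_i (Z_i - \bar Z_n)^2$ is the squared norm of its orthogonal complement, hence an independent $\chi^2_{n-1}$.
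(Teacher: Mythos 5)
Your proof is correct. Note, however, that the paper itself offers no proof of this lemma: it is stated with a citation to \cite{cook08} and treated as a known fact, so there is no internal argument to compare against. What you have written is the standard orthogonal-transformation (Helmert-matrix) derivation that underlies the cited result, and every step checks out: the reduction to $Z_i = (X_i-\mu)/\sigma$, the existence of an orthogonal $A$ with first row $\tfrac{1}{\sqrt n}\1\tr$ (completable by Gram--Schmidt), the rotational invariance $A\mathbf{Z} \sim \cN(\0, I_n)$, the norm-preservation identity combined with $\sum_i (Z_i - \bar Z_n)^2 = \sum_i Z_i^2 - n\bar Z_n^2$, and the definition of $\chi^2_{n-1}$ as a sum of $n-1$ independent squared standard normals. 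Part (b) is likewise a correct one-line computation from $\E{W_j^2} = \var{W_j} = 1$. One small remark: the independence of $\sqrt{n}\,\bar Z_n$ and the residual, which you invoke in your closing sentence, is a true byproduct of the construction but is not needed for the statement as given --- only the marginal law of $\sum_i (Z_i - \bar Z_n)^2$ is asserted --- so you could drop it without loss. As a self-contained replacement for the external citation, your argument is complete.
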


\begin{restatable}[]{lem}{varbdd}
\label{lem:var_bdd}
For any $i \in [K]$, and round $\tau \in [T]$,
\begin{align*}
  E_{H_\tau,\bmu,\bsigma}&\biggsn{\frac{\beta_{\tau,i}}{ (\kappa_{\tau,i})(\alpha_{\tau,i} -1)} \mid N_\tau(i)} 
    \leq  
        E_{H_\tau,\bmu,\bsigma}\biggsn{
    \frac{\Bigsn{2\beta_{0,i} 
    +
    \sigma_i^2 (N_{\tau}(i)-1)
    +
    \kappa_{0,i} [\frac{\sigma_{i}^2}{N_{t,i}} + (\mu_{i} - \mu_{0,i})^2  ]  }}
    {\kappa_{\tau,i}(2\alpha_{0,i} + 2N_{\tau}(i) - 2)} \mid N_\tau(i)}
\end{align*}
\end{restatable}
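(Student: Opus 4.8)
The plan is to exploit that, conditioned on $N_\tau(i)$, both $\kappa_{\tau,i} = \kappa_{0,i}+N_\tau(i)$ and $\alpha_{\tau,i} = \alpha_{0,i}+N_\tau(i)/2$ are deterministic, so that the only randomness on the left-hand side lives in $\beta_{\tau,i}$. First I would factor the deterministic denominator out of the conditional expectation and use the identity $\kappa_{\tau,i}(\alpha_{\tau,i}-1) = \tfrac{1}{2}\kappa_{\tau,i}\bign{2\alpha_{0,i}+N_\tau(i)-2}$, which matches the denominator appearing downstream in the passage from \eqref{eq:long0} to \eqref{eq:long1}. This reduces the claim to the purely numerator-level inequality
\[
  \condE{2\beta_{\tau,i}}{N_\tau(i)}
  \leq \condE{2\beta_{0,i} + \sigma_i^2(N_\tau(i)-1) + \kappa_{0,i}\Bign{\tfrac{\sigma_i^2}{N_\tau(i)} + (\mu_i-\mu_{0,i})^2}}{N_\tau(i)}\,.
\]

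The key step is then to compute $\condE{2\beta_{\tau,i}}{N_\tau(i),\mu_i,\sigma_i}$ exactly. Reading off the posterior update in \cref{alg:bayes_unknown},
\[
  2\beta_{\tau,i} = 2\beta_{0,i} + \sum_{s}\I{A_s=i}(x_{s,i}-\bar x_{\tau,i})^2 + \frac{\kappa_{0,i}N_\tau(i)}{\kappa_{0,i}+N_\tau(i)}(\bar x_{\tau,i}-\mu_{0,i})^2\,.
\]
The crucial observation is that, conditional on $(\mu_i,\sigma_i^2,N_\tau(i))$, the rewards observed for arm $i$ are $N_\tau(i)$ i.i.d. draws from $\cN(\mu_i,\sigma_i^2)$: the per-round reward $x_{t,i}$ is generated independently of the pull $A_t$, which depends only on posterior samples, so conditioning on the pull count does not bias these samples. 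Given this, \cref{lem:avg_chi}(a) gives that $\sum_s\I{A_s=i}(x_{s,i}-\bar x_{\tau,i})^2$ is a $\chi^2_{N_\tau(i)-1}$ random variable scaled by $\sigma_i^2$, so by \cref{lem:avg_chi}(b) its conditional mean is $\sigma_i^2(N_\tau(i)-1)$; and a bias–variance split, using $\bign{E[\bar x_{\tau,i}]=\mu_i}$ and $\bign{\var{\bar x_{\tau,i}}=\sigma_i^2/N_\tau(i)}$, gives $\condE{(\bar x_{\tau,i}-\mu_{0,i})^2}{N_\tau(i),\mu_i,\sigma_i} = \sigma_i^2/N_\tau(i) + (\mu_i-\mu_{0,i})^2$.

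Combining these and dropping the shrinkage factor via $\frac{\kappa_{0,i}N_\tau(i)}{\kappa_{0,i}+N_\tau(i)}\leq \kappa_{0,i}$ yields
\[
  \condE{2\beta_{\tau,i}}{N_\tau(i),\mu_i,\sigma_i}
  \leq 2\beta_{0,i} + \sigma_i^2(N_\tau(i)-1) + \kappa_{0,i}\Bign{\tfrac{\sigma_i^2}{N_\tau(i)} + (\mu_i-\mu_{0,i})^2}\,,
\]
which is exactly the conditional integrand on the right-hand side. Taking the outer expectation over $(\mu_i,\sigma_i)$ given $N_\tau(i)$ by the tower rule, and dividing back by the deterministic factor $\kappa_{\tau,i}(\alpha_{\tau,i}-1)$, closes the argument. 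The main obstacle I anticipate is the conditioning and measurability bookkeeping rather than the algebra: one must justify rigorously that conditioning on the \emph{adaptively chosen} count $N_\tau(i)$ leaves the arm-$i$ rewards i.i.d. Gaussian, so that the $\chi^2$ identity of \cref{lem:avg_chi} applies verbatim. I would handle this by noting the reward vector $\boldsymbol{x}_t$ is sampled independently of $A_t$, so $\set{x_{s,i}}$ are i.i.d. and independent of $\set{A_s}$ given $(\mu_i,\sigma_i^2)$; the edge cases $N_\tau(i)\in\set{0,1}$ (where $\sigma_i^2(N_\tau(i)-1)$ vanishes or $\sigma_i^2/N_\tau(i)$ is degenerate) are dispatched separately, since there the empirical terms of $\beta_{\tau,i}$ reduce to $\beta_{0,i}$ and the inequality is immediate.
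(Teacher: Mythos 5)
Your proposal is correct and follows essentially the same route as the paper's proof: expand $\beta_{\tau,i}$ via the posterior update, drop the shrinkage factor $\tfrac{\kappa_{0,i}N_\tau(i)}{\kappa_{0,i}+N_\tau(i)}\le\kappa_{0,i}$, use the $\chi^2_{N_\tau(i)-1}$ identity of \cref{lem:avg_chi} for the sum of squared deviations, and the bias--variance split $\condE{(\bar x_{\tau,i}-\mu_{0,i})^2}{\mu_i,\sigma_i,N_\tau(i)}=\sigma_i^2/N_\tau(i)+(\mu_i-\mu_{0,i})^2$. If anything, your version is slightly cleaner: you perform the $(\bar x_{\tau,i}-\mu_{0,i})^2$ decomposition in expectation, where it holds with equality because the cross term has zero conditional mean, whereas the paper asserts it as a pointwise inequality (which is not literally valid pointwise); you also make explicit the conditioning argument justifying the i.i.d.\ treatment of the arm-$i$ rewards given $N_\tau(i)$, which the paper leaves implicit.
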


\begin{proof}
We start by noting that for any $i \in [K]$, 
\begin{align*}
 \mathbb{E}_{H_\tau,\bsigma}&\Bigsn{ \mathbb E_{\bsigma_{\tau}}[ \sigma_{\tau,i} \mid H_{\tau},\bsigma ]}
	\\
	& = 
 \mathbb{E}_{H_{\tau},\bsigma}\Bigsn{ \mathbb E_{\lambda_{\tau,i} \sim \text{Gam}(\alpha_{\tau,i},\beta_{\tau,i})}{ \sqrt{\frac{1}{ \kappa_{\tau,i}\lambda_{\tau,i}}}} \mid H_{\tau},\bsigma}
    \\
    &
    \leq \mathbb{E}_{H_{\tau},\bsigma}\Bigsn{ \sqrt{\frac{1}{ \kappa_{\tau,i}}}  \sqrt{\mathbb E_{\lambda_{\tau,i}} \biggsn{\frac{1}{\lambda_{\tau,i}} } } \mid H_{\tau},\bsigma} 
    \\
    &
    = \mathbb{E}_{H_{\tau},\bsigma}\Bigsn{ \sqrt{\frac{1}{ \kappa_{\tau,i}}}
    \sqrt{\frac{\beta_{\tau,i}}{ (\alpha_{\tau,i} -1)}   } \mid H_{\tau},\bsigma},
\end{align*}
where the last equality holds by the fact that given $H_{\tau}$, $\frac{1}{\lambda_{t,i}}$ follows the \textit{Inverse-Gamma}$(\alpha_{\tau,i},\beta_{\tau,i}^{-1})$ distribution and \cref{lem:inv_gam} \cite{cook08}. Now from the posterior update rules, given $H_{\tau}$ we have:
\begin{align*}
\frac{\beta_{\tau,i}}{ (\alpha_{\tau,i} -1)} 
    & \leq 
    \frac{\beta_{0,i} 
    +
    \frac{1}{2}\sum_{t \in [\tau]}\I{A_t = i}(x_{t,i} - \bar{x}_{\tau,i})^2
    +
    \frac{\kappa_{0,i} N_t(i) (\bar{x}_{\tau,i} - \mu_{0,i})^2 }{2(\kappa_{0,i} + N_{\tau}(i))} }{(\alpha_{0,i} + N_{\tau}(i)/2 -1)}
    \\
    & \leq 
    \frac{2\beta_{0,i} 
    +
    \sum_{t \in [\tau]}\I{A_t = i}(x_\tau(i) - \bar{x}_{\tau,i})^2
    +
    \kappa_{0,i} [(\bar{x}_{\tau,i} - \mu_{i})^2 + (\mu_{i} - \mu_{0,i})^2  ]  }
    {2(\alpha_{0,i} + N_{\tau}(i)/2 -1)}
\end{align*}

Now given any fixed reward-precision $\lambda_i \sim \text{Gam}(\alpha_{0,i},\beta_{0,i})$, $\sigma_{i}^2= \frac{1}{\lambda_{i}}$, mean-reward $\mu_i \sim \cN(\mu_{0,i},\sigma_{0,i}^2)$, and fixed number of pulls $N_\tau(i) = s$ (say), note $x_{t,i} \overset{iid}{\sim} \cN(\mu_{i}, \sigma_{i}^2)$ can be seen as $N_\tau(i) = s$ independent and identical draws from $\cN(\mu_i,\sigma_i^2)$. Thus
$\sum_{t = 1}^\tau\I{A_t = i} \frac{(x_{t,i} - \bar x_{\tau,i})^2}{\sigma_{i}^2} = \sum_{t = 1}^s\frac{(x_{t,i} - \bar x_{\tau,i})^2}{\sigma_{i}^2}$ follows $\chi^2(s-1)$, i.e. chi-squared distribution with $s-1$ degrees of freedom (see \cref{lem:avg_chi}). Further using \cref{lem:avg_chi}, part-(b) again, we know 
\[
E_{x_{t,i} \sim \cN(\mu_i,\sigma_i^2)} \biggsn{\sum_{t = 1}^s\frac{(x_{t,i} - \bar x_{\tau,i})^2}{\sigma_{i}^2}} = (s-1).
\]

Applying this to above chain of equations, note we get:

\begin{align*}
& E_{H_\tau,\bmu,\bsigma}\biggsn{\frac{\beta_{\tau,i}}{ (\kappa_{\tau,i})(\alpha_{\tau,i} -1)} \mid N_{\tau}(i)}
	\\    
    & = 
    E_{H_\tau,\bmu,\bsigma}\biggsn{
    \frac{{2\beta_{0,i} 
    +
    E_{x_{t,i}\sim \cN(\mu_i,\sigma_i^2)}\biggsn{\sigma_i^2 \sum_{t = 1}^{N_{\tau}(i)}(\frac{x_{t,i} - \bar{x}_{\tau,i}}{\sigma_i})^2 \mid \mu_i, \sigma_{i}}
    +
    \kappa_{0,i} [(\bar{x}_{\tau,i} - \mu_{i})^2 + (\mu_{i} - \mu_{0,i})^2  ]  }}
    {2(\kappa_{\tau,i})(\alpha_{0,i} + N_{\tau}(i)/2 -1)} \mid N_\tau(i)}
    \\
    & = 
    E_{H_\tau,\bmu,\bsigma}\biggsn{
    \frac{\Bigsn{2\beta_{0,i} 
    +
    \sigma_i^2 (N_{\tau}(i)-1)
    +
    \kappa_{0,i} \big[E_{\bar x_{\tau,i} \overset{\text{iid}}{\sim} \cN(\mu,\sigma^2/N_\tau(i))}[(\bar{x}_{\tau,i} - \mu_{i})^2 \mid \mu_i,\sigma_i] + (\mu_{i} - \mu_{0,i})^2  \big]  }}
    {2(\kappa_{\tau,i})(\alpha_{0,i} + N_{\tau}(i)/2 -1)} \mid N_\tau(i)}
    \\
    & \overset{(1)}{=} 
    E_{H_\tau,\bmu,\bsigma}\biggsn{
    \frac{\Bigsn{2\beta_{0,i} 
    +
    \sigma_i^2 (N_{\tau}(i)-1)
    +
    \kappa_{0,i} [\frac{\sigma_{i}^2}{N_{t,i}} + (\mu_{i} - \mu_{0,i})^2  ]  }}
    {2(\kappa_{\tau,i})(\alpha_{0,i} + N_{\tau}(i)/2 -1)} \mid N_\tau(i)}
\end{align*}
where (1) again follows since given $\mu_i, \sigma_i$ and $N_\tau(i)$ as before, $x_{t,i} \overset{\text{iid}}{\sim} \cN(\mu,\sigma^2)$, and thus by definition 
$\bar x_{\tau, i} = \frac{1}{N_{\tau}(i)} \sum_{t = 1}^\tau \I{A_t = i} x_{t,i} \sim \cN\big(\mu, \frac{\sigma^2}{N_{\tau}(i)}\big)$> This leads to $E_{\bar x_{\tau,i} \overset{\text{iid}}{\sim} \cN(\mu,\sigma^2/N_\tau(i))}[\bar x_{\tau, i} - \mu_i]^2 = \frac{\sigma_i^2}{N_{\tau}(i)}$. Thus the claim is concluded. 
\end{proof}
%%%%%%%%%%%%%%%%%%%%%%%

%%%%%%%%%%%%%%%%%%%%%%%%%

\section{Technical Lemmas}
\label{sec:technical lemmas}

\begin{lem}
\label{lem:gaussian posterior update} 
Let $\mu_0 \sim \cN(\hat{\mu}, \hat{\sigma}^2)$, $\theta \mid \mu_0 \sim \cN(\mu_0, \sigma_0^2)$, and $Y_i \mid \theta \sim \cN(\theta, \sigma^2)$ for all $i \in [n]$. Then
\begin{align*}
  \mu_0 \mid Y_1, \dots, Y_n
  \sim \cN\left(\lambda^{-1} \left(\frac{\hat{\mu}}{\hat{\sigma}^2} +
  \frac{\sum_{i = 1}^n Y_i}{n \sigma_0^2 + \sigma^2}\right), \,
  \lambda^{-1}\right)\,, \quad
  \lambda
  = \frac{1}{\hat{\sigma}^2} + \left(\sigma_0^2 + \frac{\sigma^2}{n}\right)^{-1}\,.
\end{align*}
\end{lem}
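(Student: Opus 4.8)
The plan is to collapse the three-level hierarchy $\mu_0 \to \theta \to (Y_1,\dots,Y_n)$ into a single effective Gaussian observation model for $\mu_0$, and then invoke standard scalar Gaussian--Gaussian conjugacy.

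First I would establish that the data enter the posterior of $\mu_0$ only through the sample mean $\bar Y := \frac1n\sum_{i=1}^n Y_i$. Writing the conditional density of the observations given $\theta$ and using the decomposition $\sum_{i=1}^n (Y_i-\theta)^2 = \sum_{i=1}^n (Y_i - \bar Y)^2 + n(\bar Y - \theta)^2$, the term $\sum_i (Y_i-\bar Y)^2$ is independent of $\theta$ and hence of $\mu_0$. Marginalizing over $\theta$, this shows $p(Y_1,\dots,Y_n \mid \mu_0)$ factors as a function of the data alone times a function of $(\bar Y, \mu_0)$, so by the factorization criterion $\bar Y$ is sufficient for $\mu_0$ and $p(\mu_0 \mid Y_1,\dots,Y_n) = p(\mu_0 \mid \bar Y)$.

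Next I would compute the marginal law of $\bar Y$ given $\mu_0$. Since each link is linear--Gaussian, I can write $\theta = \mu_0 + \varepsilon_1$ with $\varepsilon_1 \sim \cN(0,\sigma_0^2)$ and, because $\bar Y \mid \theta \sim \cN(\theta, \sigma^2/n)$, also $\bar Y = \theta + \varepsilon_2$ with $\varepsilon_2 \sim \cN(0,\sigma^2/n)$ independent of $\varepsilon_1$. Summing the independent Gaussian noises gives the effective observation model
\[
  \bar Y \mid \mu_0 \sim \cN\big(\mu_0,\ \tau^2\big), \qquad \tau^2 := \sigma_0^2 + \frac{\sigma^2}{n}\,.
\]
Finally I would apply scalar Gaussian conjugacy to the prior $\mu_0 \sim \cN(\hat\mu, \hat\sigma^2)$ with this single effective observation $\bar Y$ of variance $\tau^2$. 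The posterior precision adds, $\lambda = \hat\sigma^{-2} + \tau^{-2} = \hat\sigma^{-2} + (\sigma_0^2 + \sigma^2/n)^{-1}$, matching the claim, and the posterior mean is the precision-weighted average $\lambda^{-1}\big(\hat\mu/\hat\sigma^2 + \bar Y/\tau^2\big)$. The stated form then follows from the identity $\bar Y/\tau^2 = (\tfrac1n\sum_i Y_i)/(\sigma_0^2 + \sigma^2/n) = (\sum_i Y_i)/(n\sigma_0^2 + \sigma^2)$.

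There is no genuine obstacle here---the computation is routine conjugacy---but the one step that needs care is the collapse of the two upper levels: one must verify that marginalizing out $\theta$ really yields a single Gaussian likelihood in $\mu_0$ (equivalently, that $\bar Y$ is sufficient and that the two variances $\sigma_0^2$ and $\sigma^2/n$ add), rather than attempting to update on $\theta$ and $\mu_0$ simultaneously.
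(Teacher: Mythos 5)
Your proof is correct, but it takes a different route from the paper's. The paper proves the lemma by brute force: it writes the joint density, completes the square in $\theta$ inside the integral $f(\mu_0)=\int_\theta \exp[-\tfrac12(v\sum_i(Y_i-\theta)^2+v_0(\theta-\mu_0)^2)]\dif\theta$, evaluates the Gaussian integral, and then completes the square a second time in $\mu_0$ to read off the posterior mean and precision. You instead argue probabilistically: you reduce the data to the sufficient statistic $\bar Y$ via the decomposition $\sum_i(Y_i-\theta)^2=\sum_i(Y_i-\bar Y)^2+n(\bar Y-\theta)^2$, observe that the two linear--Gaussian links compose so that $\bar Y\mid\mu_0\sim\cN(\mu_0,\sigma_0^2+\sigma^2/n)$ (independent noises add in variance), and then apply one-observation Gaussian conjugacy. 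Both arguments are complete; the key step you must (and do) justify is that marginalizing $\theta$ yields a single Gaussian likelihood in $\mu_0$ with the collapsed variance $\sigma_0^2+\sigma^2/n$, which is exactly the content of the paper's first completion of the square. Your version is shorter and makes the structure of the answer transparent --- the posterior precision $\lambda=\hat\sigma^{-2}+(\sigma_0^2+\sigma^2/n)^{-1}$ is visibly ``prior precision plus effective observation precision,'' and the identity $\bar Y/\tau^2=(\sum_iY_i)/(n\sigma_0^2+\sigma^2)$ recovers the stated mean --- whereas the paper's calculation produces the same constants ($c_1^{-1}v_0v=(n\sigma_0^2+\sigma^2)^{-1}$, $c_2=\lambda$) only at the end of the algebra. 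The paper's approach buys a self-contained derivation that does not presuppose the one-dimensional conjugacy formula; yours buys brevity and an interpretation, at the cost of invoking that standard fact.
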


\begin{proof}
The derivation is standard \citep{gelman07data} and we only include it for completeness. To simplify notation, let
\begin{align*}
  v
  = \sigma^{-2}\,, \quad
  v_0
  = \sigma_0^{-2}\,, \quad
  \hat{v}
  = \hat{\sigma}^{-2}\,, \quad
  c_1
  = v_0 + n v\,, \quad
  c_2
  = \hat{v} + v_0 - c_1^{-1} v_0^2\,.
\end{align*}
The posterior distribution of $\mu_0$ is
\begin{align*}
  & \int_\theta \left(\prod_{i = 1}^n \cN(Y_i; \theta, \sigma^2)\right)
  \cN(\theta; \mu_0, \sigma_0^2) \dif \theta \,
  \cN(\mu_0; \hat{\mu}, \hat{\sigma}^2) \\
  & \quad \propto \int_\theta \exp\left[
  - \frac{1}{2} v \sum_{i = 1}^n (Y_i - \theta)^2 -
  \frac{1}{2} v_0 (\theta - \mu_0)^2\right] \dif \theta
  \exp\left[- \frac{1}{2} \hat{v} (\mu_0 - \hat{\mu})^2\right]\,.
\end{align*}
Let $f(\mu_0)$ denote the integral. We solve it as
\begin{align*}
  f(\mu_0)
  & = \int_\theta \exp\left[- \frac{1}{2}
  \left(v \sum_{i = 1}^n (Y_i^2 - 2 Y_i \theta + \theta^2) +
  v_0 (\theta^2 - 2 \theta \mu_0 + \mu_0^2)\right)\right] \dif \theta \\
  & \propto \int_\theta \exp\left[- \frac{1}{2}
  \left(c_1 \left(\theta^2 -
  2 c_1^{-1} \theta \left(v \sum_{i = 1}^n Y_i + v_0 \mu_0\right)\right) +
  v_0 \mu_0^2\right)\right] \dif \theta \\
  & = \int_\theta \exp\left[- \frac{1}{2}
  \left(c_1 \left(\theta -
  c_1^{-1} \left(v \sum_{i = 1}^n Y_i + v_0 \mu_0\right)\right)^2 -
  c_1^{-1} \left(v \sum_{i = 1}^n Y_i + v_0 \mu_0\right)^2 +
  v_0 \mu_0^2\right)\right] \dif \theta \\
  & = \exp\left[- \frac{1}{2}
  \left(- c_1^{-1} \left(v \sum_{i = 1}^n Y_i + v_0 \mu_0\right)^2 +
  v_0 \mu_0^2\right)\right] \\
  & \propto \exp\left[- \frac{1}{2}
  \left(- c_1^{-1} \left(2 v_0 \mu_0 v \sum_{i = 1}^n Y_i + v_0^2 \mu_0^2\right) +
  v_0 \mu_0^2\right)\right]\,.
\end{align*}
Now we chain all equalities and have
\begin{align*}
  f(\mu_0) \exp\left[- \frac{1}{2} \hat{v} (\mu_0 - \hat{\mu})^2\right]
  & \propto \exp\left[- \frac{1}{2}
  \left(- c_1^{-1} \left(2 v_0 \mu_0 v \sum_{i = 1}^n Y_i + v_0^2 \mu_0^2\right) +
  v_0 \mu_0^2 + \hat{v} (\mu_0^2 - 2 \mu_0 \hat{\mu} + \hat{\mu}^2)
  \right)\right] \\
  & \propto \exp\left[- \frac{1}{2}
  c_2 \left(\mu_0^2 -
  2 c_2^{-1} \mu_0 \left(c_1^{-1} v_0 v \sum_{i = 1}^n Y_i +
  \hat{v} \hat{\mu}\right)\right)\right] \\
  & \propto \exp\left[- \frac{1}{2}
  c_2 \left(\mu_0 -
  c_2^{-1} \left(c_1^{-1} v_0 v \sum_{i = 1}^n Y_i +
  \hat{v} \hat{\mu}\right)\right)^2\right]\,.
\end{align*}
Finally, note that
\begin{align*}
  c_1
  & = \frac{1}{\sigma_0^2} + \frac{n}{\sigma^2}
  = \frac{n \sigma_0^2 + \sigma^2}{\sigma_0^2 \sigma^2}\,, \\
  c_1^{-1} v_0 v
  & = \frac{\sigma_0^2 \sigma^2}{n \sigma_0^2 + \sigma^2} \frac{1}{\sigma_0^2 \sigma^2}
  = \frac{1}{n \sigma_0^2 + \sigma^2}\,, \\
  c_2
  & = \frac{1}{\hat{\sigma}^2} +
  \frac{1}{\sigma_0^2} \left(1 - \frac{c_1^{-1}}{\sigma_0^2}\right)
  = \frac{1}{\hat{\sigma}^2} +
  \frac{1}{\sigma_0^2} \left(1 - \frac{\sigma^2}{n \sigma_0^2 + \sigma^2}\right)
  = \frac{1}{\hat{\sigma}^2} + \left(\sigma_0^2 + \frac{\sigma^2}{n}\right)^{-1}
  = \lambda\,.
\end{align*}
This concludes the proof.
\end{proof}

\begin{lem}
\label{lem:reciprocal root sum} For any integer $n$ and $a \geq 0$,
\begin{align*}
  \sum_{i = 1}^n \frac{1}{\sqrt{i + a}}
  \leq 2 (\sqrt{n + a} - \sqrt{a})
  \leq 2 \sqrt{n}\,.
\end{align*}
\end{lem}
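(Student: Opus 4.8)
The plan is to prove both inequalities by a standard integral-comparison argument followed by the subadditivity of the square root. The key observation is that the function $x \mapsto (x + a)^{-1/2}$ is nonnegative and monotonically decreasing on $[0, \infty)$ for any fixed $a \geq 0$, so each summand can be dominated by an integral of this function over a unit interval lying to its left.

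Concretely, the first step is to fix $i \in [n]$ and note that because $(x + a)^{-1/2}$ is decreasing, its minimum over the interval $[i - 1, i]$ is attained at the right endpoint $x = i$. Since the interval has unit length, this yields the pointwise bound
\begin{align*}
  \frac{1}{\sqrt{i + a}}
  = \min_{x \in [i - 1, i]} \frac{1}{\sqrt{x + a}}
  \leq \int_{i - 1}^i \frac{1}{\sqrt{x + a}} \dif x\,.
\end{align*}
Summing this over $i = 1, \dots, n$ telescopes the integration intervals into a single integral over $[0, n]$, and evaluating it using the antiderivative $2 \sqrt{x + a}$ gives
\begin{align*}
  \sum_{i = 1}^n \frac{1}{\sqrt{i + a}}
  \leq \int_0^n \frac{1}{\sqrt{x + a}} \dif x
  = 2 \big(\sqrt{n + a} - \sqrt{a}\big)\,,
\end{align*}
which is the first claimed inequality.

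The second step is the inequality $2 (\sqrt{n + a} - \sqrt{a}) \leq 2 \sqrt{n}$. This follows from the subadditivity of the square root, $\sqrt{n + a} \leq \sqrt{n} + \sqrt{a}$, which itself is immediate after squaring both sides: the inequality $n + a \leq n + 2 \sqrt{n a} + a$ reduces to $0 \leq 2 \sqrt{n a}$, true for all $n, a \geq 0$. Rearranging gives $\sqrt{n + a} - \sqrt{a} \leq \sqrt{n}$, and multiplying by $2$ completes the argument.

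I do not anticipate any real obstacle here, since the statement is elementary. The only point requiring a moment of care is the direction of the monotonicity comparison in the first step: one must dominate each term by the integral over the interval \emph{to its left} (so that the integrand exceeds the value $\frac{1}{\sqrt{i+a}}$ throughout), which is exactly what makes the decreasing property of $(x+a)^{-1/2}$ the right tool and produces the clean upper endpoint $n$ in the integral. An equally valid alternative would be to telescope directly, writing $\frac{1}{\sqrt{i+a}} \le 2(\sqrt{i+a} - \sqrt{i-1+a})$ from the concavity of $\sqrt{\cdot}$ and summing, but the integral formulation is the most transparent.
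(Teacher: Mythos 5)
Your proof is correct and follows essentially the same route as the paper: bound the decreasing summand $1/\sqrt{i+a}$ by an integral comparison (your $\int_0^n (x+a)^{-1/2}\,\dif x$ is the paper's $\int_a^{n+a} x^{-1/2}\,\dif x$ after a change of variables), then use subadditivity of the square root for the second inequality. You are merely more explicit at both steps --- spelling out the unit-interval domination and verifying $\sqrt{n+a} \leq \sqrt{n} + \sqrt{a}$ by squaring, where the paper simply invokes monotonicity and \say{diminishing returns}.
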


\begin{proof}
Since $1 / \sqrt{i + a}$ decreases in $i$, the sum can be bounded using integration as
\begin{align*}
  \sum_{i = 1}^n \frac{1}{\sqrt{i + a}}
  \leq \int_{x = a}^{n + a} \frac{1}{\sqrt{x}} \dif x
  = 2 (\sqrt{n + a} - \sqrt{a})\,.
\end{align*}
The inequality $\sqrt{n + a} - \sqrt{a} \leq \sqrt{n}$ holds because the square root has diminishing returns.
\end{proof}

\begin{lem}
\label{lem:reciprocal sum} For any integer $n$ and $a \geq 0$,
\begin{align*}
  \sum_{i = 1}^n \frac{1}{i + a}
  \leq \log(1 + n / a)\,.
\end{align*}
\end{lem}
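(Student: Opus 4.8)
The plan is to mirror the integral-comparison argument used in the preceding \cref{lem:reciprocal root sum}, since the summand $1/(i+a)$ is again a positive, decreasing function of the index. First I would dispatch the degenerate case $a = 0$: there the right-hand side is $\log(1 + n/0) = +\infty$, so the claimed inequality holds vacuously, and I may assume $a > 0$ for the remainder.

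For $a > 0$, the key observation is that $f(x) = 1/(x + a)$ is decreasing in $x$, so on each interval $[i-1, i]$ we have $f(x) \geq f(i) = 1/(i + a)$. Integrating this pointwise bound over $[i-1, i]$ gives
\begin{align*}
  \frac{1}{i + a}
  \leq \int_{x = i - 1}^{i} \frac{1}{x + a} \dif x\,.
\end{align*}
Summing over $i = 1, \dots, n$ telescopes the integration domain to $[0, n]$, yielding
\begin{align*}
  \sum_{i = 1}^n \frac{1}{i + a}
  \leq \int_{x = 0}^{n} \frac{1}{x + a} \dif x
  = \log(x + a) \Big|_{x = 0}^{n}
  = \log\biggn{\frac{n + a}{a}}
  = \log(1 + n / a)\,,
\end{align*}
which is exactly the claim.

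There is no real obstacle here: the entire argument rests on the monotonicity of the summand and the elementary antiderivative of $1/(x+a)$, so the only point demanding any care is keeping the integration limits aligned (starting the comparison at $x = 0$ rather than $x = 1$) so that the resulting logarithm matches the stated $\log(1 + n/a)$ rather than a loser constant. This lemma plays the same role for the harmonic-type sums $\sum_t 1/(s + \kappa_{0,i})$ appearing in \cref{thm:bayes_known,thm:bayes_unknown2} that \cref{lem:reciprocal root sum} plays for the square-root sums, so I would state it in this clean closed form to be invoked directly in those bounds.
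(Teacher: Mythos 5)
Your proof is correct and is essentially identical to the paper's: both bound the sum of the decreasing function $1/(i+a)$ by the integral $\int_a^{n+a} x^{-1}\,\dif x = \log(1+n/a)$, differing only in a trivial change of variables in the integration limits. Your explicit handling of the degenerate case $a=0$ is a small bonus the paper omits.
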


\begin{proof}
Since $1 / (i + a)$ decreases in $i$, the sum can be bounded using integration as
\begin{align*}
  \sum_{i = 1}^n \frac{1}{i + a}
  \leq \int_{x = a}^{n + a} \frac{1}{x} \dif x
  = \log(n + a) - \log a
  = \log(1 + n / a)\,.
\end{align*}
\end{proof}

\begin{lem}
\label{lem:reciprocal_sumsq} 
For any integer $n$ and $a \geq 0$,
\begin{align*}
  \sum_{i = 1}^n \frac{1}{i + a}
  \leq \log(1 + n / a)\,.
\end{align*}
\end{lem}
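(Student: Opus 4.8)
The statement to prove is the elementary inequality
\[
  \sum_{i = 1}^n \frac{1}{i + a} \leq \log(1 + n / a)
\]
for any integer $n \geq 1$ and $a \geq 0$ (identical to \cref{lem:reciprocal sum}). The plan is to prove it by the standard integral comparison for a monotone summand, exactly as in the companion lemma.

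First I would dispose of the degenerate case $a = 0$: there the right-hand side $\log(1 + n/a)$ is $+\infty$, so the inequality holds trivially. Hence I may assume $a > 0$ for the remainder, which guarantees that the function $g(x) = 1/(x + a)$ is finite, positive, and strictly decreasing on $[0, \infty)$.

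The key step is to exploit monotonicity of $g$ to dominate each term of the sum by an integral over a unit interval placed to the \emph{left} of the integer node. Concretely, since $g$ is decreasing, for every $i \in \set{1, \dots, n}$ and every $x \in [i-1, i]$ we have $g(i) \leq g(x)$, so
\[
  \frac{1}{i + a} = g(i) \leq \int_{i-1}^{i} \frac{1}{x + a} \dif x\,.
\]
Summing this inequality over $i = 1, \dots, n$ and recognizing that the integration intervals $[i-1,i]$ tile $[0,n]$, the right-hand side telescopes into a single integral:
\[
  \sum_{i = 1}^n \frac{1}{i + a}
  \leq \int_{0}^{n} \frac{1}{x + a} \dif x
  = \log(n + a) - \log a
  = \log\Big(1 + \frac{n}{a}\Big)\,.
\]
This chains the desired bound and completes the argument.

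There is no real obstacle here: the only point requiring a word of care is the monotonicity justification for the term-wise integral bound (and the trivial handling of $a = 0$, where the claimed bound is vacuous). Everything else is a direct evaluation of $\int 1/(x+a)\,\dif x = \log(x+a)$ together with the observation that the unit intervals partition $[0,n]$.
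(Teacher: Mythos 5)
Your proof is correct and takes essentially the same route as the paper's: the statement duplicates \cref{lem:reciprocal sum}, whose proof is the identical integral comparison (the paper integrates $1/x$ over $[a, n+a]$, which is your $\int_0^n 1/(x+a)\,\dif x$ after a shift of variable). Your explicit per-term bound over $[i-1,i]$ and your remark on the vacuous $a = 0$ case are slightly more careful than the paper's one-line argument, but the idea is the same.
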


}

\end{document}